\documentclass{article}

\usepackage{PRIMEarxiv}
\usepackage[utf8]{inputenc}
\usepackage[T1]{fontenc}
\usepackage{graphicx}
\usepackage{booktabs,multirow,array}
\usepackage{amsfonts,amssymb,mathtools}
\usepackage[notext]{stix} 
\usepackage{amsthm}
\usepackage[authoryear]{natbib}
\usepackage{microtype}
\usepackage{enumitem}
\usepackage{float}
\usepackage{url}
\usepackage{hyperref}
\usepackage[capitalize,noabbrev]{cleveref}

\hypersetup{
  hidelinks,
  hypertexnames=false,
  pdftitle={Mitigating Black-Box Score-Based Query Attacks via Gradient Confusion},
  pdfauthor={Ali Javeri, Javad Hezareh, Radin Shahdaei, AmirMahdi Sadeghzadeh}
}

\graphicspath{{figs/}}
\allowdisplaybreaks

\newcommand{\Pp}{\mathbb{P}}
\DeclareMathOperator{\sign}{sign}

\theoremstyle{plain}
\newtheorem{theorem}{Theorem}[section]
\newtheorem{theorem*}{Theorem}

\newtheorem{lemma}[theorem]{Lemma}

\theoremstyle{definition}
\newtheorem{definition}[theorem]{Definition}
\newtheorem{assumption}[theorem]{Assumption}
\theoremstyle{remark}

\title{Mitigating Black-Box Score-Based Query Attacks via Gradient Confusion}

\author{
  Ali Javeri \quad Javad Hezareh \quad Radin Shahdaei \quad
  AmirMahdi Sadeghzadeh\thanks{Corresponding author} \\
  Department of Computer Engineering, Sharif University of Technology \\
  Tehran, Iran \\
  \texttt{ali.javeri78@sharif.edu} \quad
  \texttt{javad.hezareh@sharif.edu} \\
  \texttt{radin.shahdaei01@sharif.edu} \quad
  \texttt{sadeghzadeh@sharif.edu}
}
\date{}

\begin{document}
\maketitle

\begin{abstract}
Score-based black-box adversarial attacks pose a serious threat to deployed machine 
learning systems by exploiting repeated model queries to craft adversarial examples, 
requiring neither access to model parameters nor architecture. Existing defenses 
predominantly rely on stochastic output perturbations, which fail to prevent determined attackers from recovering stable gradient estimates by increasing query budgets or averaging over randomness. In this work, we 
propose the \textbf{Gradient Confusion Defense (GCD)}, a retraining-free, deterministic test-time wrapper that actively shapes the probability vector returned to querying adversaries. By solving a constrained optimization problem grounded in the model's Jacobian, GCD aligns the exposed logit-gradient with a target confusion direction that is maximally misaligned with the true adversarial loss gradient---increasing the query complexity of attacks without sacrificing predictive utility or top-1 accuracy. We provide rigorous theoretical analysis demonstrating that GCD provably slows gradient-estimation-based attacks and induces systematic accept/reject errors in random search-based attacks, with the confusion effect amplifying as the fraction of defended queries increases. Extensive experiments on CIFAR-10 and ImageNet across multiple architectures (\mbox{ResNet-18}, \mbox{ResNet-50}, \mbox{ViT-B/16}) and four state-of-the-art attacks confirm these predictions: GCD achieves state-of-the-art robustness, while incurring zero clean accuracy degradation. 
Our code is available at: \url{https://github.com/awlliom/GradientConfusion}.
\end{abstract}

\keywords{Machine Learning, Adversarial Defense, Robustness, Adversarial Learning}

\section{Introduction}

Modern machine learning systems are often deployed via remote interfaces
(e.g., cloud APIs and online services), where users interact with models only
through query access. In this setting, standard white-box threat assumptions
\citep{Szegedy2014intriguing, Goodfellow2015explaining, MoosaviDezfooli2016deepfool, carlini2017towards, Athalye2018obfuscated, Madry2018pgd}
are frequently unrealistic, yet query-based black-box attacks remain highly
effective even without access to model parameters, gradients, or training data
\citep{Ilyas2018nes, AlDujaili2020signattack, Andriushchenko2020square, Cheng2020signopt, Guo2019simple, Ilyas2019bandit, Vo2024brusle, Reza2025gsba, Chen2020rays, Chen2020hsja, Reza2023cgba}.
By adaptively probing the model and observing outputs, adversaries can craft
adversarial examples under practical query budgets, making black-box robustness
a core requirement for secure and trustworthy deployment.

Query-based black-box attacks are commonly divided into \emph{decision-based} attacks~\citep{Cheng2020signopt, Chen2020rays, Chen2020hsja, Reza2023cgba}, which rely only on predicted labels and search for the decision boundary, and \emph{score-based} attacks~\citep{Andriushchenko2020square, AlDujaili2020signattack, Guo2019simple, Ilyas2019bandit, Vo2024brusle}, which use logits or class probabilities to estimate gradients via finite-difference or search-based strategies. In practice, score-based attacks are typically more query-efficient and achieve higher success rates, posing a serious risk to deployed classifiers that must return confidence scores. As a result, a large body of work~\citep{Qin2021rnd, Xie2018mitigate, He2019pni, Chen2022aaa, Aithal2022ornd, dashtbani2025rls, HungQuang2024rfd, wang2021fighting} has proposed defenses, most of which introduce randomness into model responses~\citep{Qin2021rnd, Aithal2022ornd, dashtbani2025rls, HungQuang2024rfd}. While such methods can increase variance or slow optimization, they do not explicitly control the gradient signal exposed through outputs, allowing attackers to average over randomness~\citep{Athalye2018eot} or increase the query budget to recover stable estimates when meaningful scores must be preserved.

In this work, we revisit black-box defense from a different perspective and
ask: \emph{What gradient signal should a model expose to an adversary?}
Instead of relying on stochastic perturbations, we propose to actively
manipulate the gradient information inferred from model outputs. Our goal is
not merely to add noise, but to deliberately bias the gradient estimates or search directions that score-based attacks attempt to recover, steering them away from directions that reduce the classification margin.

We study the problem of defending a fixed, pretrained target classifier against
query-based black-box adversaries while preserving correct predictions and
usable score outputs. We introduce a deterministic test-time response wrapper
that modifies the returned probability vector without altering the classifier
itself or its training procedure. The core idea is \emph{gradient confusion}:
we deliberately shape the output scores so that the induced adversarial loss gradient's
direction deviates from the true adversarial loss gradient that score-based attacks seek
to estimate. Using the loss gradients and Jacobian of the model’s logits, we derive an analytic target direction that misaligns the attacker’s search direction or estimated gradient, thereby increasing the number of queries required for
successful adversarial example generation. 

Building on a target confusion direction, we introduce the \emph{Gradient Confusion Defense (GCD)}, which formulates a constrained optimization problem over output probabilities to explicitly control the gradient signal exposed to score-based black-box attackers. GCD enforces three principled constraints at test time: exact preservation of the top-1 prediction to maintain clean accuracy, a bounded $\ell_1$ distortion on the returned probability vector to preserve the utility of confidence scores, and alignment of the exposed logit-gradient with a target confusion direction rather than the true gradient.

The main contributions of this work are threefold.
\begin{itemize}[itemsep=2pt, topsep=2pt, parsep=0pt, partopsep=0pt]
	\item We propose the \emph{Gradient Confusion Defense (GCD)}, a practical, retraining-free test-time defense that explicitly controls the gradient signal exposed through model outputs in the score-based black-box threat model.
	\item We \emph{theoretically} analyze how GCD forces the adversary to issue more queries to generate a successful adversarial example using gradient estimation-based and search-based attacks.
	\item \emph{Extensive experiments} show that GCD substantially reduces attack success rates while preserving clean accuracy, and further validate our theoretical analysis and demonstrate the effectiveness of our defense against several state-of-the-art query-based and adaptive attacks.
\end{itemize}

\section{Related Works}

\textbf{Score-based query attacks.} Query-based black-box adversarial attacks craft adversarial examples using only repeated queries to a deployed model, without access to parameters or gradients. These attacks are typically divided into \emph{score-based} methods~\citep{Andriushchenko2020square, AlDujaili2020signattack, Guo2019simple, Ilyas2019bandit, Vo2024brusle, Reza2025gsba, Ilyas2018nes, Chen2017zoo}, which leverage logits or class probabilities, and \emph{decision-based} methods~\citep{Cheng2020signopt, Chen2020rays, Chen2020hsja, Reza2023cgba}, which use only predicted labels and are often less query-efficient.
Early score-based attacks estimated gradients via zero-order optimization and finite-difference style approximations~\citep{Chen2017zoo, Ilyas2018nes, Ilyas2019bandit}, while later work emphasized search-based strategies that can be highly query-efficient~\citep{Andriushchenko2020square, AlDujaili2020signattack}.
Bandit-TD~\citep{Ilyas2019bandit} frames gradient estimation as a bandit optimization problem and exploits temporal correlation of successive gradients and spatial structure in images.
SquareAttack~\citep{Andriushchenko2020square} performs randomized search using localized square-shaped perturbations, accepting only loss-decreasing proposals, yielding strong practical query efficiency.
BruSLeAttack~\citep{Vo2024brusle} targets sparse ($\ell_0$) perturbations by searching over a reduced set of pixel locations to replace with values from a fixed synthetic image, and uses a Bayesian strategy guided by historical pixel-manipulation outcomes to improve query efficiency.

\paragraph{Black-box defenses.} A foundational line of defense is adversarial training (AT)~\citep{Tramer2018at, Salman2019robustrandomizedsmoothing, Rice2020at}, which empirically improves robustness by training on adversarially perturbed samples. Extensions incorporate data augmentation~\citep{Rebuffi2021dataaugmentation}, denoising~\citep{Salman2020denoised}, and ensembling~\citep{Liu2018selfensemble}, but AT often remains costly and can reduce clean accuracy; moreover, many such defenses primarily target white-box threats and can be brittle under obfuscated-gradient effects~\citep{Athalye2018obfuscated}.
For query-based threats, defenses commonly aim to disrupt the attacker’s ability to recover stable gradients from outputs, typically via randomization or output/logit manipulation.
AAA~\citep{Chen2022aaa} explicitly optimizes returned logits to confound the attacker by wrapping the original loss with a surrogate that preserves similar degradation behavior while inducing opposite gradient slopes, thereby steering optimization toward incorrect directions.
RLS~\citep{dashtbani2025rls} randomizes the effective scale of logits by multiplying them with a random coefficient, which empirically degrades gradient estimation and slows attacks.
Boundary defense~\citep{Aithal2022ornd} injects noise into the model’s output to hinder convergence, and RND~\citep{Qin2021rnd} injects noise at the input level, both aiming to increase the query complexity required for successful attacks.

\section{Preliminaries}

	\label{parag:define}
	Let $\mathcal{X} \subseteq [0,1]^d$ be the input space, and let $[K] = \{1,\dots,K\}$ denote the label set.
    A classifier $\mathcal{M} : \mathcal{X} \to \Delta^K$ outputs a probability vector
    $\mathcal{M}(\mathbf{x}) = \sigma(\mathbf{z}(\mathbf{x}))$, where $\mathbf{z}(\mathbf{x})$ denotes the logits and $\sigma$ is the softmax function.
    Here, $\Delta^K = \{ \mathbf{p} \in \mathbb{R}^K \mid \sum_i \mathbf{p}_i = 1 \}$ denotes the probability simplex.
    Let $k \in [K]$ denote the true label of input $\mathbf{x}$.
    For simplicity, we denote $\hat{\mathbf{y}} := \mathcal{M}(\mathbf{x})$.

	
	\paragraph{Attacks and perturbations.}
	An adversarial example $\mathbf{x}_{\mathrm{adv}}$ for a sample $\mathbf{x}$ with corresponding label $k$ is generated by adding a perturbation $\boldsymbol{\delta}$ to $\mathbf{x}$.
    Typically, the norm of the perturbation $\boldsymbol{\delta}$ is bounded so as not to change the ground-truth label of $\mathbf{x}$.
    Formally, an adversarial example is defined as
    $\mathbf{x}_{\mathrm{adv}} = \mathbf{x} + \boldsymbol{\delta}$,
    such that $\lVert \mathbf{x}_{\mathrm{adv}} - \mathbf{x} \rVert_p \le R$, where $R$ is the perturbation budget, and the classifier $\mathcal{M}$ misclassifies $\mathbf{x}_{\mathrm{adv}}$
    (untargeted attack: $\arg\max_i \mathcal{M}(\mathbf{x}_{\mathrm{adv}})_i \neq k$)
    or predicts a target class $t$
    (targeted attack: $\arg\max_i \mathcal{M}(\mathbf{x}_{\mathrm{adv}})_i = t$).

	
	There are two strategies for generating adversarial examples in the black-box setting: transfer-based \citep{Papernot2017transfer} and query-based \citep{Chen2017zoo}.
    In the query-based setting (our focus), the adversary can adaptively query the model up to a budget $Q$ that bounds the number of query the adversary can perform.
    Each query returns either (i) a score or probability vector (score-based setting) or (ii) only the predicted label (decision-based setting).
    We focus on \textit{score-based} black-box attacks, as they are generally stronger than decision-based attacks and harder to defend \citep{Ilyas2018nes}.
    Query-based black-box attacks typically optimize an adversarial loss using only the model's outputs.

	
	\paragraph{Adversarial Loss.}
	To attack the black-box target classifier $\mathcal{M}(\mathbf{x})$,
    the following adversarial loss function has been widely used in score-based attacks \citep{Chen2017zoo, Ilyas2018nes, Qin2021rnd}:
    \begin{equation}
    \label{eq:margin_def1}
    f(\mathbf{x}) := \mathcal{M}(\mathbf{x})_i - \mathcal{M}(\mathbf{x})_j,
    \end{equation}
    where $i$ and $j$ denote the indices of the largest and second-largest components of the probability vector $\mathcal{M}(\mathbf{x})$, respectively.
    Decreasing $f(\mathbf{x})$ encourages the top prediction (typically the correct label) to lose confidence relative to its strongest competitor.
    There are two common approaches for minimizing $f(\mathbf{x})$: gradient estimation (Zero-Order Optimization) and random search.

	
	\paragraph{Gradient estimation (Zero-Order Optimization)}
	The most prominent approach for generating adversarial examples is to update the input in the opposite direction of the gradient of the adversarial loss function with respect to the input, $\nabla_{\mathbf{x}} f(\mathbf{x})$, so as to decrease the adversarial loss.
    Since the attacker cannot directly access the model's parameters or the gradient $\nabla_{\mathbf{x}} f(\mathbf{x})$, it instead estimates the gradient using only the model's outputs.
    A gradient estimator that is widely used in score-based attacks \citep{Ilyas2018nes, nesterov2017random, Qin2021rnd} is given by
    \begin{equation}
    \label{eq:zo}
    \widehat{\nabla}_{\mathbf{x}} f_\mu(\mathbf{x})
    := \frac{1}{m} \sum_{i=1}^m
    \frac{f(\mathbf{x} + \mu \mathbf{u}_i) - f(\mathbf{x})}{\mu}\,\mathbf{u}_i,
    \end{equation}
    where $\mathbf{u}_i \sim \mathcal{N}(\mathbf{0}, \mathbf{I})$ and $\mu > 0$ is a small smoothing parameter.
    In this setup, the adversary queries the model at points in a neighborhood around the input $\mathbf{x}$. As a result, they effectively observe a Gaussian-smoothed version of the loss function, denoted by $f_\mu(\mathbf{x})$.
    The attacker then performs Projected Gradient Descent (PGD) steps based on $\widehat{\nabla}_{\mathbf{x}} f_\mu(\mathbf{x})$ while respecting the input perturbation constraint:
    \begin{equation}
    \label{eq:pgd}
    \mathbf{x}_{t+1} = \Pi_R\!\left(\mathbf{x}_t - \eta_t \widehat{\nabla}_{\mathbf{x}} f_\mu(\mathbf{x}_t)\right),
    \end{equation}
    where $\eta_t$ is step size at iteration t, and $\Pi_\xi$ denotes projection onto the $\ell_p$-ball of radius $R$, i.e., $\| \mathbf{x}' - \mathbf{x} \|_p \le R$. Each step of PGD requires one query to the target model. Increasing the number of steps required for PGD to converge increases the query complexity of the attack.

	
    \paragraph{Random-search.}
Random search is an iterative method for minimizing the adversarial loss function.
Let $\mu>0$ denote the step size (i.e., the perturbation magnitude used by the search rule).
At iteration $t$, given the current iterate $\mathbf{x}_t$, the attacker samples a search direction
$\mathbf{u}_t \sim \mathcal{D}$ from a predefined distribution (e.g., Gaussian directions or structured
perturbations), and evaluates the candidate $\mathbf{x}_t + \mu \mathbf{u}_t$.
If this perturbation decreases the adversarial loss $f(\cdot)$ in~\eqref{eq:margin_def1}, the attacker
updates $\mathbf{x}_{t+1} \leftarrow \mathbf{x}_t + \mu \mathbf{u}_t$; otherwise, $\mathbf{x}_{t+1} \leftarrow \mathbf{x}_t$.
This type of random-search (hill-climbing) rule is commonly used in query-based black-box attacks
\citep{Guo2019simple,Andriushchenko2020square,Ilyas2019bandit}.
Using the adversarial loss $f(\mathbf{x})$, the one-step improvement signal is defined as
\begin{equation}
\label{eq:rs-h}
h_t := f(\mathbf{x}_t + \mu \mathbf{u}_t) - f(\mathbf{x}_t).
\end{equation}

    If $h_t < 0$, then moving along $\mathbf{u}_t$ decreases the loss and is therefore considered a useful attack direction.
    Accordingly, the accepted update is given by
    \begin{equation}
    \label{eq:rs-update}
    \mathbf{x}_{t+1} = \Pi_\xi\!\left(\mathbf{x}_t + \mathbb{I}\!\left(h_t < 0\right)\cdot \mu \mathbf{u}_t\right),
    \end{equation}
    where $\mathbb{I}(\cdot)$ denotes the indicator function.

\section{Gradient Confusion Defense (GCD)}

In score-based black-box attacks, the adversary minimizes the loss $f(\mathbf{x})$
in \eqref{eq:margin_def1} using only the model output $\hat{\mathbf{y}}\in\Delta^K$.
Gradient Confusion Defense (GCD) disrupts this process by returning a modified
output $\tilde{\mathbf{y}}\in\Delta^K$ instead of $\hat{\mathbf{y}}$, increasing
the number of queries needed to succeed (potentially exceeding the budget $Q$).
GCD constructs $\tilde{\mathbf y}$ from $\mathcal{M}$ and $\hat{\mathbf y}$ while
preserving utility by keeping $\|\hat{\mathbf y}-\tilde{\mathbf y}\|_1$ small and
ensuring $\text{argmax}_k(\hat{\mathbf y}_k)=\text{argmax}_k(\tilde{\mathbf y}_k)$.
For analysis, we introduce a hypothetical model $\tilde{\mathcal{M}}$ with
$\tilde{\mathbf y}=\tilde{\mathcal{M}}(\mathbf x)=\sigma(\tilde{\mathbf z}(\mathbf x))$,
and drop the dependence on $\mathbf x$ in notation.

Since the adversary only sees the output of the defended target model by the gradient confusion method, their adversarial loss function transforms to: \begin{equation} \label{eq:margin_def} \tilde{f}(\mathbf{x}) := \tilde{\mathcal{M}}(\mathbf{x})_i - \tilde{\mathcal{M}}(\mathbf{x})_j, \end{equation}

where $i$ and $j$ denote the indices of the largest and second-largest components of the probability vector $\tilde{\mathcal{M}}(\mathbf{x})$, respectively. The gradient confusion solves the following minimization problem to compute $\tilde{\mathbf y}$: \begin{equation} \label{eq:mainproblem} \begin{aligned} &\underset{\tilde{y}}{\arg\min} \: \: \; \langle \nabla_{\mathbf x} f(\mathbf x), \nabla_{\mathbf x} \tilde{f}(\mathbf x) \rangle, \\ \text{s.t} \quad \| &\mathcal{M}(\mathbf x) - \tilde{\mathcal{M}}(\mathbf x) \|_1 \leq \xi \quad \text{and} \quad \tilde{\mathbf y} \in \Delta^K \end{aligned} \end{equation}

where $\xi$ is the distortion budget. The GCD minimization problem addresses two objectives.
\textbf{(i) \textit{Confusion}}: Minimizing $\langle \nabla_\mathbf{x} f(\mathbf{x}), \nabla_\mathbf{x} \tilde{f}(\mathbf{x}) \rangle$ misaligns the adversary’s estimated gradient $\nabla_\mathbf{x}\tilde{f}(\mathbf{x})$ from the true gradient $\nabla_\mathbf{x} f(\mathbf{x})$. This increases the query cost of gradient-estimation attacks and can mislead random-search attacks by flipping the sign of $h_t$. Theorems \ref{thm:prob-converge} and \ref{thm:rs_switch_failure} in Section \ref{subsec:theoryanalys} provide theoretical support, and Section \ref{sec:eval_sadeghzadeh} validates GCD empirically.
\textbf{(ii) \textit{Utility}}: The constraint $|| \mathcal{M}(\mathbf{x}) - \tilde{\mathcal{M}}(\mathbf{x}) ||_1 \leq \xi$ (equivalently $\| \hat{\mathbf{y}} - \tilde{\mathbf{y}}\|_1 \leq \xi)$ keeps $\tilde{\mathbf{y}}=\tilde{\mathcal{M}}(\mathbf{x})$ close to $\hat{\mathbf{y}}=\mathcal{M}(\mathbf{x})$, preserving predictive performance. Section \ref{subsec:method} details how we solve \eqref{eq:mainproblem}.

\subsection{Theoretical Analysis}
\label{subsec:theoryanalys}
In this section, we theoretically show that solving the optimization problem in \eqref{eq:mainproblem} increases the model's resistance to score-based black-box attacks.
To incorporate the crafted loss function $\tilde{f}$, we define a randomized defended objective:
\begin{equation}
\label{eq:f_def}
f_{\mathrm{def}}(\mathbf{x})
:=
\lambda f(\mathbf{x}) + (1-\lambda)\tilde f(\mathbf{x}),
\quad
\lambda \sim \mathrm{Bernoulli}(p),
\end{equation}
where $\lambda$ is drawn independently for each query.
This theoretical model allows us to formalize the effect of $\tilde{f}$ on the optimization problem faced by the adversary.

Under mild assumptions, we show that using $f_{\text{def}}$ increases the number of steps required by gradient-estimation-based attacks and misleads search-based attacks.
Consequently, it confuses the adversary and increases the number of queries required to craft an adversarial input.
Moreover, we argue that setting $p \to 0$ maximizes this confusion effect.

\begin{assumption}
\label{assum:lipschitz}
The loss functions $f$ and $\tilde f$ are Lipschitz continuous with constants $L_0(f)$ and $L_0(\tilde f)$, respectively.
\end{assumption}

\begin{assumption}
\label{assum:smooth}
The loss functions $f$ and $\tilde f$ are smooth, meaning their gradients are Lipschitz continuous with constants $L_1(f)$ and $L_1(\tilde f)$, respectively.
\end{assumption}

\subsubsection{Gradient Estimation-Based Attacks}
Gradient estimation-based attacks approximate gradients using function evaluations (e.g., \eqref{eq:zo}) and update the input via projected gradient descent (e.g., \eqref{eq:pgd}).
Thus, the adversary applies (projected) gradient descent to minimize the adversarial loss.
The number of queries performed by the adversary is denoted by $T$, which corresponds to the number of optimization steps.
In theory, the adversary is considered to have succeeded in optimizing the adversarial loss function if the norm of its gradient is sufficiently small at the final iterate.
Thus, to analyze the adversary's optimization procedure, we need to bound the gradient norm.
In our framework, using \eqref{eq:f_def}, the gradient estimator is driven by $f_{\mathrm{def}}$ rather than $f$.

\begin{theorem}
\label{thm:prob-converge}
Suppose Assumption~\ref{assum:lipschitz} holds.
Consider zero-order optimization using the gradient estimator driven by $f_{\text{def}}$, with smoothing parameter $\mu \le \mu^\ast$, for $T$ iterations and constant step size $\eta_t = \eta^\ast$.
If, for all $\mathbf{x} \in \mathbb{R}^d$, it holds that $|f(\mathbf{x}) - \tilde{f}(\mathbf{x})| \le \epsilon$, where $\epsilon > 0$ is a small constant, then the following bound holds:
\begin{equation}
\label{eq:prob-converge}
\begin{aligned}
& \frac{p}{T} \sum_{t=0}^{T-1} \mathbb{E}\!\left[\|\nabla f_\mu(\mathbf{x}_t)\|^2\right]
+ \frac{1-p}{T} \sum_{t=0}^{T-1} \mathbb{E}\!\left[\left\langle \nabla f_\mu(\mathbf{x}_t), \nabla \tilde{f}_\mu(\mathbf{x}_t) \right\rangle\right] \\
& \leq
\sqrt{\frac{4d^3 L_0(f)^5 R}{\epsilon^\prime T}
\left(
p + (1-p)\frac{L_0(\tilde{f})^2}{L_0(f)^2}
+ 2p(1-p)\frac{\epsilon^2}{\epsilon^{\prime 2}}
\right)}.
\end{aligned}
\end{equation}
Here, $R$ denotes the perturbation budget, $\epsilon^\prime > 0$ is a small constant ensuring that $f_\mu$ is close to $f$, and the expectation is taken over the randomness of the optimization process up to iteration $t$.
\end{theorem}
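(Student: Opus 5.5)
The plan is to adapt the standard nonsmooth zero-order analysis of \citet{nesterov2017random} (Gaussian smoothing with a Lipschitz, possibly nonsmooth, objective) to the randomized objective $f_{\mathrm{def}}$ in \eqref{eq:f_def}. The key observation is that $f_\mu$ is always smooth with $L_1(f_\mu)\le \sqrt{d}\,L_0(f)/\mu$, even though only Assumption~\ref{assum:lipschitz} is imposed. We therefore track the decrease of $f_\mu$, the smoothed \emph{true} loss, along iterates driven by the defended estimator
\[
\mathbf g_t=\frac{f_{\mathrm{def}}(\mathbf x_t+\mu\mathbf u_t)-f_{\mathrm{def}}(\mathbf x_t)}{\mu}\,\mathbf u_t .
\]
Conditioning on $\mathbf x_t$ and using independence of $\lambda$ from $\mathbf u_t$ gives
\[
\mathbb E[\mathbf g_t\mid \mathbf x_t]=p\nabla f_\mu(\mathbf x_t)+(1-p)\nabla\tilde f_\mu(\mathbf x_t).
\]
This identity is exactly what produces the left-hand side of \eqref{eq:prob-converge} once it is paired with $\nabla f_\mu$. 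We first handle the unprojected step and then argue that projection onto the $R$-ball does not increase the relevant quantities (nonexpansiveness), with $R$ entering through the range of $f_\mu$ over the feasible set.

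\textbf{Step 1 (descent inequality).} By the smoothness of $f_\mu$,
\[
f_\mu(\mathbf x_{t+1})\le f_\mu(\mathbf x_t)-\eta\langle\nabla f_\mu(\mathbf x_t),\mathbf g_t\rangle+\tfrac{L_1(f_\mu)}{2}\eta^2\|\mathbf g_t\|^2 .
\]
Taking expectations and telescoping over $t=0,\dots,T-1$ yields
\[
\frac{1}{T}\sum_t\Bigl(p\,\mathbb E\|\nabla f_\mu\|^2+(1-p)\,\mathbb E\langle\nabla f_\mu,\nabla\tilde f_\mu\rangle\Bigr)\le \frac{\Delta}{\eta T}+\frac{\eta L_1(f_\mu)}{2}\cdot\frac1T\sum_t\mathbb E\|\mathbf g_t\|^2 ,
\]
where $\Delta$ bounds $f_\mu(\mathbf x_0)-\inf f_\mu$ over the feasible region. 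I would bound $\Delta\lesssim L_0(f)R$ using the Lipschitz property on the ball of radius $R$.

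\textbf{Step 2 (second moment).} I expect this to be the main obstacle, because it is where the cross term $2p(1-p)\epsilon^2/\epsilon'^2$ must appear. Write $\lambda^2=\lambda$ and expand
\[
\mathbb E\|\mathbf g_t\|^2=p\,\mathbb E\|\mathbf g^f\|^2+(1-p)\,\mathbb E\|\mathbf g^{\tilde f}\|^2 .
\]
The cleanest route to the stated form is to bound the variance around the mean through $\mathbb E\|\mathbf g^f-\mathbf g^{\tilde f}\|^2$. This carries a factor $p(1-p)$ from $\mathrm{Var}(\lambda)$, and the difference quotient of $f-\tilde f$ is controlled by $|f-\tilde f|\le\epsilon$, giving a contribution of order $\epsilon^2/\mu^2$ times Gaussian moments $\mathbb E\|\mathbf u\|^2=d$. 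For the pure terms, Nesterov--Spokoiny gives $\mathbb E\|\mathbf g^f\|^2\le L_0(f)^2(d+4)^2$, and likewise for $\tilde f$ with $L_0(\tilde f)$. Factoring out $L_0(f)^2 d^2$ then gives the bracket
\[
p+(1-p)\frac{L_0(\tilde f)^2}{L_0(f)^2}+2p(1-p)\frac{\epsilon^2}{\epsilon'^2},
\]
once $\mu$ is tied to $\epsilon'$. If the variance decomposition does not give exactly the factor $2$, I would fall back on the crude bound $\|a+b\|^2\le 2\|a\|^2+2\|b\|^2$, which naturally produces it.

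\textbf{Step 3 (parameter choice).} Set $\mu=\mu^\ast$ so that $|f_\mu-f|\le \mu L_0(f)\sqrt d\le\epsilon'$, i.e.\ $\mu^\ast=\epsilon'/(\sqrt d\,L_0(f))$. Then $L_1(f_\mu)=dL_0(f)^2/\epsilon'$ and $\epsilon^2/\mu^2=d L_0(f)^2\epsilon^2/\epsilon'^2$. The right-hand side has the form $A/(\eta T)+\eta B$ with
\[
A\approx L_0(f)R,\qquad B\approx \frac{d^3L_0(f)^4}{\epsilon'}\,[\cdot].
\]
Choosing $\eta^\ast=\sqrt{A/(BT)}$ gives $2\sqrt{AB/T}$, which matches $\sqrt{4d^3L_0(f)^5R[\cdot]/(\epsilon'T)}$. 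The remaining work is bookkeeping of constants, absorbing $(d+4)^2$ into $d^2$ up to constants, and verifying that the projection step preserves the descent inequality, e.g.\ by bounding the terms via $\|\mathbf x_{t+1}-\mathbf x_t\|\le\eta\|\mathbf g_t\|$.
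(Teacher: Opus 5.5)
Your skeleton matches the paper's: the descent lemma for $f_\mu$, the mean identity $\mathbb E[\mathbf g_t\mid\mathbf x_t]=p\nabla f_\mu+(1-p)\nabla\tilde f_\mu$, telescoping, the bound $f_\mu(\mathbf x_0)-\mathbb E f_\mu(\mathbf x_T)\le L_0(f)R$, optimizing $\eta$, and setting $\mu=\epsilon'/(\sqrt d\,L_0(f))$. The gap is in Step 2, which you rightly flag as the crux: it uses the wrong model of the randomization. Writing $\mathbf g_t=\lambda\mathbf g^f+(1-\lambda)\mathbf g^{\tilde f}$ and using $\lambda^2=\lambda$ assumes one $\lambda$ shared by both evaluations in the finite difference. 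In \eqref{eq:f_def}, however, $\lambda$ is drawn independently for each query. So the estimator uses independent $\lambda_1$ at $\mathbf x+\mu\mathbf u$ and $\lambda_2$ at $\mathbf x$:
\[
\mathbf g=\bigl[\lambda_1\alpha+(1-\lambda_1)\tilde\alpha+(\lambda_1-\lambda_2)\delta\bigr]\mathbf u,\qquad \delta=\frac{f(\mathbf x)-\tilde f(\mathbf x)}{\mu},
\]
where $\alpha,\tilde\alpha$ are the difference quotients of $f,\tilde f$. The mismatch term has zero mean, so your mean identity survives. Your second-moment identity does not; the correct one (this is what the paper computes) is
\[
\mathbb E_{\lambda_1,\lambda_2}\bigl[\lambda_1\alpha+(1-\lambda_1)\tilde\alpha+(\lambda_1-\lambda_2)\delta\bigr]^2=p\alpha^2+(1-p)\tilde\alpha^2+\frac{2p(1-p)}{\mu^2}\,(f-\tilde f)(\mathbf x+\mu\mathbf u)\,(f-\tilde f)(\mathbf x).
\]
Bound the last term by $2p(1-p)\epsilon^2/\mu^2$ using $|f-\tilde f|\le\epsilon$ at both points, then multiply by $\mathbb E\|\mathbf u\|^2\le d$. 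This gives exactly $2p(1-p)\epsilon^2/\epsilon'^2$, including the factor $2$. The $\epsilon$-term comes from the level gap $f-\tilde f$ that appears whenever the two queries are answered by different objectives. It does not come from a difference quotient of $f-\tilde f$.

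Your proposed mechanism, the Bernoulli variance $p(1-p)\|\mathbf g^f-\mathbf g^{\tilde f}\|^2$, is a symptom of the mis-modeling. In the shared-$\lambda$ model your own identity already gives $\mathbb E\|\mathbf g_t\|^2\le (d+4)^2\bigl(pL_0(f)^2+(1-p)L_0(\tilde f)^2\bigr)$ with no $\epsilon$-term at all. So the hypothesis $|f-\tilde f|\le\epsilon$ is never genuinely used, and you would be proving the theorem for a different, easier defense. Your fallback $\|a+b\|^2\le 2\|a\|^2+2\|b\|^2$ does work once it is applied to the right split, $a=[\lambda_1\alpha+(1-\lambda_1)\tilde\alpha]\mathbf u$ and $b=(\lambda_1-\lambda_2)\delta\mathbf u$. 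It doubles $D$, and your constant $L_1(f_\mu)=\sqrt d\,L_0(f)/\mu$ (half the paper's) absorbs that factor exactly. But you have to identify the mismatch term first.

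Separately, drop the plan to carry the projection through. With $\mathbf x_{t+1}=\Pi(\mathbf x_t-\eta\mathbf g_t)$, the linear term $\langle\nabla f_\mu(\mathbf x_t),\mathbf x_{t+1}-\mathbf x_t\rangle$ is no longer $-\eta\langle\nabla f_\mu(\mathbf x_t),\mathbf g_t\rangle$. Nonexpansiveness controls only the quadratic term, and the unbiasedness step breaks. The paper simply analyzes the unprojected iteration and uses $R$ only in the crude bound on $f_\mu(\mathbf x_0)-\mathbb E f_\mu(\mathbf x_T)$.
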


The detailed proof of Theorem~\ref{thm:prob-converge} is provided in Appendix~\ref{apx:conv-theorem-proof}.
This theorem shows that incorporating the function $\tilde{f}$ into the optimization process can effectively perturb the optimization landscape and mislead the gradient estimates.
In particular, by optimizing the confusion term $\langle \nabla f_\mu(\mathbf{x}_t), \nabla \tilde{f}_\mu(\mathbf{x}_t) \rangle$, we can increase the overall bound and thereby improve robustness against gradient estimation attacks.

\paragraph{Implications of Theorem~\ref{thm:prob-converge}.} Let {\scriptsize\[
C(p) = \sqrt{\frac{4d^3 L_0(f)^5 R}{\epsilon^\prime}\left(p + (1-p)\frac{L_0(\tilde{f})^2}{L_0(f)^2}
+ 2p(1-p)\frac{\epsilon^2}{\epsilon^{\prime 2}} \right)},
\]}be a constant depending on $p$ and $C(0) \ne 0$.
Rearranging the bound in \eqref{eq:prob-converge} as:
\begin{equation}
\label{eq:our-bound}
    \begin{aligned}
    &\frac{1}{T} \sum_{t=0}^{T-1} \mathbb{E}\!\left[\| \nabla f_\mu(\mathbf{x}_t) \|^2\right]
    \leq \frac{C(p)}{p\sqrt{T}} \\
    &\qquad - \frac{(1-p)}{p}\,
    \frac{1}{T} \sum_{t=0}^{T-1}
    \mathbb{E}\!\left[\langle \nabla f_\mu(\mathbf{x}_t),
    \nabla \tilde{f}_\mu(\mathbf{x}_t) \rangle\right]
    \end{aligned}
\end{equation}
For adversary to succeed in optimization, they need to minimize the right hand side of the inequality below a certain threshold by increasing $T$ which is the number of optimization steps. Thus the defender can increase the second term in the right hand side by optimizing the confusion term $\langle \nabla f_\mu(\mathbf{x}_t), \nabla \tilde{f}_\mu(\mathbf{x}_t) \rangle$ to be negative and large in magnitude. In other words, the defender needs to make the gradients of the two functions to be misaligned. This would force the attacker to use a larger number of optimization steps (queries) $T$ to achieve the desired threshold, thus increasing the attack cost.

\paragraph{Choosing $p \to 0$.}
We can further increase the bound in \eqref{eq:our-bound} by choosing $p$ close to $0$. When $p \to 0$, the factor $(1-p)/p$ amplifies the contribution of the confusion term. Therefore, if the defender enforces $\langle \nabla f_\mu(\mathbf{x}_t), \nabla \tilde{f}_\mu(\mathbf{x}_t) \rangle$ to be negative and large in magnitude, the right-hand side of \eqref{eq:our-bound} can become arbitrarily large, effectively preventing the attacker from driving the average gradient norm below a fixed threshold within a limited number of iterations. This motivates choosing $p=0$, i.e., relying entirely on $\tilde{f}$ in the optimization process.

\subsubsection{Search-Based Attacks}
\label{sec:rs_switch_analysis}

Random-search score-based attacks propose a direction $\mathbf{u}_t$ and accept or reject it based on the \emph{sign} of the one-step improvement signal (cf.\ \eqref{eq:rs-h} and \eqref{eq:rs-update}). Under our switching defense, each query is answered using the randomized objective $f_{\mathrm{def}}$, which independently returns either $f$ or $\tilde f$; thus the two evaluations used to form the improvement from $\mathbf{x}_t$ to $\mathbf{x}_t+\mu\mathbf{u}_t$ may come from \emph{different} objectives. Let $h_{\mathrm{def},t}$ denote the improvement signal computed from the defender’s returned values. Because $h_{\mathrm{def},t}$ can mix $f$ and $\tilde f$ across the two points, $\sign(h_{\mathrm{def},t})$ can disagree with $\sign(h_t)$, causing the attacker to accept harmful steps or reject helpful ones. The next result lower bounds the probability of such one-step sign errors via (i) a geometric term depending on the angle between $\nabla f(\mathbf{x})$ and $\nabla \tilde f(\mathbf{x})$, and (ii) a smoothness correction scaling with $\mu$ (vanishing as $\mu\to 0$ under Assumption~\ref{assum:smooth}).

\begin{theorem}
\label{thm:rs_switch_failure}
Let Assumption~\ref{assum:smooth} hold. Let $d\ge 2$ and $\mathbf{u}\sim\mathcal N(\mathbf{0},\mathbf{I})$.
Fix $\mathbf{x}\in\mathbb R^d$ and $\mu>0$. Define the true and defended one-step signals
\[
h := f(\mathbf{x}+\mu\mathbf{u})-f(\mathbf{x}),
\quad
h_{\mathrm{def}}
:=
f_{\mathrm{def}}(\mathbf{x}+\mu\mathbf{u})-f_{\mathrm{def}}(\mathbf{x}).
\]
Then
\begin{equation}\label{eq:rs_switch_failure_bound}
\begin{aligned}
 \mathbb{P}&\bigl(\sign(h)\neq \sign(h_{\rm def})\bigr) \geq \\
& (1-p)^2\Bigg[
\frac{1}{\pi}\arccos\Biggl(
\frac{\langle \nabla f(\mathbf{x}),\nabla \tilde f(\mathbf{x})\rangle}
{\|\nabla f(\mathbf{x})\|\,\|\nabla \tilde f(\mathbf{x})\|}
\Biggr) \\
& - \frac{\mu d}{\sqrt{2 \pi}}
\left(
\frac{L_1(f)}{\| \nabla f(\mathbf{x}) \|} + 
\frac{L_1(\tilde f)}{\| \nabla \tilde f(\mathbf{x}) \|}
\right)
\Bigg]
\end{aligned}
\end{equation}

\end{theorem}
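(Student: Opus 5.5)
We condition on the two independent Bernoulli draws $\lambda_1$ (for the query at $\mathbf{x}+\mu\mathbf{u}$) and $\lambda_0$ (for the query at $\mathbf{x}$). With probability $(1-p)^2$ both draws equal $0$, and on that event $h_{\mathrm{def}} = \tilde h := \tilde f(\mathbf{x}+\mu\mathbf{u})-\tilde f(\mathbf{x})$. Since the other three configurations contribute nonnegative probability of sign disagreement, and $\mathbf{u}$ is independent of the $\lambda$'s,
\[
\mathbb{P}\bigl(\sign(h)\neq\sign(h_{\rm def})\bigr)\ \ge\ (1-p)^2\,\mathbb{P}_{\mathbf{u}}\bigl(\sign(h)\neq\sign(\tilde h)\bigr).
\]
It therefore suffices to show that $\mathbb{P}_{\mathbf{u}}(\sign(h)\neq\sign(\tilde h))$ is at least the bracketed expression in \eqref{eq:rs_switch_failure_bound}.

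\textbf{Step 1 (linear surrogate).} Set $\mathbf{g}=\nabla f(\mathbf{x})$ and $\tilde{\mathbf{g}}=\nabla\tilde f(\mathbf{x})$. We compare $h$ with $\mu\langle \mathbf{g},\mathbf{u}\rangle$ and $\tilde h$ with $\mu\langle\tilde{\mathbf{g}},\mathbf{u}\rangle$. For Gaussian $\mathbf{u}$, the vectors $\mathbf{u}/\|\mathbf{u}\|$ lie uniformly on the sphere. By the classical Goemans--Williamson (Sheppard) identity, $\sign\langle\mathbf{g},\mathbf{u}\rangle\neq\sign\langle\tilde{\mathbf{g}},\mathbf{u}\rangle$ with probability exactly $\theta/\pi$, where $\theta=\arccos\bigl(\langle \mathbf{g},\tilde{\mathbf{g}}\rangle/(\|\mathbf{g}\|\|\tilde{\mathbf{g}}\|)\bigr)$. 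This produces the geometric term.

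\textbf{Step 2 (smoothness correction).} By Assumption~\ref{assum:smooth} and Taylor's theorem, $|h-\mu\langle\mathbf{g},\mathbf{u}\rangle|\le \tfrac{L_1(f)}{2}\mu^2\|\mathbf{u}\|^2$. Hence $\sign(h)=\sign\langle\mathbf{g},\mathbf{u}\rangle$ except on the event $B_f=\{|\langle\mathbf{g},\mathbf{u}\rangle|\le \tfrac{L_1(f)}{2}\mu\|\mathbf{u}\|^2\}$, and analogously for $\tilde f$ with an event $B_{\tilde f}$. Outside $B_f\cup B_{\tilde f}$ the two signs agree with their linear counterparts. This gives
\[
\mathbb{P}(\sign h\neq\sign\tilde h)\ \ge\ \theta/\pi-\mathbb{P}(B_f)-\mathbb{P}(B_{\tilde f}).
\]

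\textbf{Step 3 (bounding the bad events).} This is the main obstacle. Write $\langle\mathbf{g},\mathbf{u}\rangle=\|\mathbf{g}\|Z$ with $Z\sim\mathcal N(0,1)$. The difficulty is that $\|\mathbf{u}\|^2$ is correlated with $Z$, so a conditional small-ball bound is not immediate. The plan is to decompose $\mathbf{u}=Z\hat{\mathbf{g}}+\mathbf{w}$ with $\mathbf{w}$ independent of $Z$, so that $\|\mathbf{u}\|^2=Z^2+\|\mathbf{w}\|^2$. Conditioning on $\mathbf{w}$, we bound the Gaussian density by $1/\sqrt{2\pi}$, which controls the small-ball probability of $Z$ in an interval of half-width $\tfrac{L_1(f)\mu}{2\|\mathbf{g}\|}(Z^2+\|\mathbf{w}\|^2)$; the $Z^2$ part can be absorbed for small $\mu$, or handled by a Markov-type argument. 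Taking expectations and using $\mathbb{E}\|\mathbf{u}\|^2=d$ should give
\[
\mathbb{P}(B_f)\le \frac{\mu d\,L_1(f)}{\sqrt{2\pi}\,\|\mathbf{g}\|}.
\]
The constant may need care; a cruder route that simply bounds $\mathbb{P}(|Z|\le a\|\mathbf{u}\|^2)\le \mathbb{E}[2a\|\mathbf{u}\|^2]/\sqrt{2\pi}$ should yield the same form, modulo justifying that dependence. The same argument applies to $B_{\tilde f}$. Combining Steps 1--3 with the $(1-p)^2$ factor gives \eqref{eq:rs_switch_failure_bound}.
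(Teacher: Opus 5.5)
Your Steps 0--2 match the paper exactly: condition on both Bernoulli draws being $0$ to get the factor $(1-p)^2$, use Sheppard's identity for the $\theta/\pi$ term, and use the Taylor remainder $\tfrac{L_1}{2}\mu^2\|\mathbf u\|^2$ to define sign-stability events whose failure probabilities are subtracted. The gap is Step 3. This is the only nontrivial estimate (the paper's Lemma~\ref{lem:gaussian-linear-quadratic}), and you leave it as a heuristic.

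The inequality you want, $\Pp(|Z|\le a\|\mathbf u\|^2)\le \frac{2a}{\sqrt{2\pi}}\mathbb{E}\|\mathbf u\|^2$ with $a=L_1(f)\mu/(2\|\mathbf g\|)$, does not follow from the Gaussian density bound, because the threshold depends on $Z$. The general principle $\Pp(|Z|\le H)\le \frac{2}{\sqrt{2\pi}}\mathbb{E}[H]$ is false when $H$ is correlated with $Z$: take $H=|Z|$, which gives $1$ on the left and $2/\pi$ on the right.

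Your decomposition $\mathbf u=Z\hat{\mathbf g}+\mathbf w$ does not remove this dependence. Conditionally on $\mathbf w$, the event $\{|Z|\le a(Z^2+\|\mathbf w\|^2)\}$ is not an interval. It is $\{|Z|\le t_-\}\cup\{|Z|\ge t_+\}$, or the whole line when $4a^2\|\mathbf w\|^2\ge 1$. For example, with $a=\tfrac12$ and $\|\mathbf w\|^2=1$ it holds with conditional probability $1$, while the conditional bound $\frac{2a}{\sqrt{2\pi}}(1+\|\mathbf w\|^2)$ equals $2/\sqrt{2\pi}<1$. So no argument that is pointwise in $\mathbf w$ can give your bound. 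The honest case split, $|Z|\le 2a\|\mathbf w\|^2$ or $|Z|\ge 1/(2a)$, does work. However, it yields $\frac{4a(d-1)}{\sqrt{2\pi}}+\Pp(|Z|\ge 1/(2a))$, which is not the stated constant $\frac{\mu d}{\sqrt{2\pi}}$. ``Absorbing for small $\mu$'' runs into the same loss.

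The missing idea is to split $\mathbf u$ radially rather than along $\mathbf g$. Write $\mathbf u=r\mathbf s$ with $r=\|\mathbf u\|$ independent of $\mathbf s$, which is uniform on $S^{d-1}$. Then $|\langle\mathbf g,\mathbf u\rangle|<c\|\mathbf u\|^2$ holds if and only if $|T|<\frac{c}{\|\mathbf g\|}r$, where $T=\langle\hat{\mathbf g},\mathbf s\rangle$. Conditioning on $r$ now makes the threshold deterministic.

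For $d\ge 3$, $T$ has density $C_d(1-t^2)^{(d-3)/2}\le C_d\le\sqrt{d/(2\pi)}$, using $\Gamma(x+\tfrac12)/\Gamma(x)\le\sqrt{x}$. Together with $\mathbb{E}r\le\sqrt d$, this gives $\Pp\le 2C_d\frac{c}{\|\mathbf g\|}\mathbb{E}r\le\sqrt{2/\pi}\,\frac{cd}{\|\mathbf g\|}$. With $c=L_1(f)\mu/2$ this is exactly $\frac{\mu d}{\sqrt{2\pi}}\frac{L_1(f)}{\|\nabla f(\mathbf x)\|}$.

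For $d=2$ the density of $T$ is unbounded at $\pm1$, a point the paper's proof glosses over. The bound still follows from $\Pp(|T|<t)=\frac{2}{\pi}\arcsin t\le t$.

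In this route the factor $d$ comes from $\sqrt d$ (density of $T$) times $\sqrt d$ (from $\mathbb{E}r$). It does not come from $\mathbb{E}\|\mathbf u\|^2$ times the Gaussian density, so your guessed constant is right, but for a different reason than your sketch suggests.
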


The proof of Theorem~\ref{thm:rs_switch_failure} is deferred to Appendix~\ref{app:search-based}. The event $\sign(h)\neq \sign(h_{\mathrm{def}})$ is exactly a \emph{one-step decision error}: relative to the true objective $f$, the attacker either rejects a truly improving direction or accepts a non-improving direction. Theorem~\ref{thm:rs_switch_failure} shows that this error probability is lower bounded by a misalignment term, proportional to the angle between $\nabla f(\mathbf{x})$ and $\nabla \tilde f(\mathbf{x})$, up to a smoothness correction that decreases with the step size $\mu$. Consequently, designing $\tilde f$ so that $\nabla \tilde f(\mathbf{x})$ is strongly misaligned with $\nabla f(\mathbf{x})$ forces random search to waste queries, degrading progress under a fixed budget.  A discussion of the Gaussian direction model (and its relationship to isotropic samplers used in practice) is provided in Appendix~\ref{app:gaussian}

\paragraph{Conclusion.}
In summary, this subsection provides theoretical support that GCD increases the query cost of score-based black-box attacks. Theorem~\ref{thm:prob-converge} shows that inducing a negative confusion term slows gradient-estimation-based optimization, while Theorem~\ref{thm:rs_switch_failure} shows that gradient misalignment causes accept/reject errors in search-based attacks. Together, these results justify designing $\tilde f$ to remain close to $f$ in value while being misaligned in gradient.

\subsection{Solving the GCD Minimization Problem}
\label{subsec:method}
GCD is a \emph{test-time output wrapper} defense. It leaves the underlying target classifier unchanged,
but modifies the probability vector returned to users. Concretely, for an input $\mathbf{x}$, the target
model produces logits $\mathbf{z}(\mathbf{x})\in\mathbb{R}^K$ and probability vector
$\hat{\mathbf{y}}=\sigma(\mathbf{z}(\mathbf{x}))\in\Delta^{K}$. We suppose that GCD returns a perturbed distribution
$\tilde{\mathbf{y}}=\sigma(\tilde{\mathbf{z}}(\mathbf{x}))\in\Delta^{K}$. We first focus on minimizing
$\langle \nabla_{\mathbf{x}} f(\mathbf{x}), \nabla_{\mathbf{x}} \tilde{f}(\mathbf{x}) \rangle$, and then incorporate the constraints
$\| \mathcal{M}(\mathbf{x}) - \tilde{\mathcal{M}}(\mathbf{x}) \|_1 \leq \xi$ and $\tilde{\mathbf{y}}\in \Delta^K$.

Based on the chain rule, we can write the inner product as follows:
\begin{equation}
\label{eq:gc1}
\begin{aligned}
	\underset{\tilde{\mathbf{y}}}{\arg\min} \; & \left\langle \nabla_{\mathbf{x}} f(\mathbf{x}), \nabla_{\mathbf{x}} \tilde{f}(\mathbf{x}) \right\rangle \\
	= \underset{\tilde{\mathbf{y}}}{\arg\min} \; & \left\langle \nabla_{\mathbf{x}} \mathbf{z}(\mathbf{x}) \,\nabla_{\mathbf{z}} f(\mathbf{z}(\mathbf{x})), \;
	\nabla_{\mathbf{x}} \tilde{\mathbf{z}}(\mathbf{x}) \,\nabla_{\tilde{\mathbf{z}}} \tilde{f}(\tilde{\mathbf{z}}(\mathbf{x})) \right\rangle .
\end{aligned}
\end{equation}

Solving \eqref{eq:gc1} exactly is difficult because $\tilde{\mathbf{y}}=\sigma(\tilde{\mathbf{z}}(\mathbf{x}))$ appears \emph{inside} the objective through $\tilde{f}$ and $\nabla_{\mathbf{x}} \tilde{f}$, so the optimal $\tilde{\mathbf{z}}$ determines the gradient terms needed to optimize for $\tilde{\mathbf{z}}$ (a circular \textit{chicken-and-egg} dependency). Therefore, due to the constraint $\| \mathcal{M}(\mathbf{x}) - \tilde{\mathcal{M}}(\mathbf{x}) \|_1 \leq \xi$, we use a local approximation of $\tilde{\mathbf{z}}(\mathbf{x})$ by $\mathbf{z}(\mathbf{x})$ (i.e., $\mathbf{z}(\mathbf{x}) \approx \tilde{\mathbf{z}}(\mathbf{x})$), which yields $\nabla_{\mathbf{x}} \mathbf{z}(\mathbf{x}) \approx \nabla_{\mathbf{x}} \tilde{\mathbf{z}}(\mathbf{x})$. Letting $J(\mathbf{x}) = \nabla_{\mathbf{x}} \mathbf{z}(\mathbf{x})$, our objective then approximately becomes:
\begin{equation}
\label{eq:gc2}
\begin{aligned}
	\underset{\tilde{\mathbf{y}}}{\arg\min} \; & \left\langle \nabla_{\mathbf{x}} f(\mathbf{x}), \nabla_{\mathbf{x}} \tilde{f}(\mathbf{x}) \right\rangle \\
	\approx \; \underset{\tilde{\mathbf{y}}}{\arg\min} \: &
	\left\langle J(\mathbf{x})\,\nabla_{\mathbf{z}} {f}(\mathbf{z}(\mathbf{x})),\;
	J(\mathbf{x})\,\nabla_{\mathbf{z}} \tilde{f}(\mathbf{z}(\mathbf{x})) \right\rangle .
\end{aligned}
\end{equation}

Our experiments show this surrogate delivers consistent improvements against query-based black-box attacks.
The only derivative that is a function of $\tilde{\mathbf{y}}$ is $\nabla_{\mathbf{z}} \tilde{f}(\mathbf{z}(\mathbf{x}))$.
Let $\mathbf{z}_k(\mathbf{x})$ denote the $k$-th component of the perturbed logits vector $\mathbf{z}(\mathbf{x})$. For the $k$-th component of
$\nabla_{\mathbf{z}} \tilde{f}(\mathbf{z}(\mathbf{x}))$, we have
\begin{equation}
\frac{\partial \tilde{f}(\mathbf{z}(\mathbf{x}))}{\partial \mathbf{z}_k(\mathbf{x})} =
\begin{cases}
	\tilde{\mathbf{y}}_i(1 - \tilde{\mathbf{y}}_i) - (- \tilde{\mathbf{y}}_j \tilde{\mathbf{y}}_i), & \text{if } k = i, \\
	-\tilde{\mathbf{y}}_i \tilde{\mathbf{y}}_j - \tilde{\mathbf{y}}_j(1 - \tilde{\mathbf{y}}_j), & \text{if } k = j, \\
	-\tilde{\mathbf{y}}_i \tilde{\mathbf{y}}_k - (-\tilde{\mathbf{y}}_j \tilde{\mathbf{y}}_k), & \text{if } k \neq i, k \neq j.
\end{cases}
\end{equation}
where $i = \arg\max_k \tilde{\mathbf{y}}_k$ (index of predicted class after defense), and
$j = \arg\max_{k \neq i} \tilde{\mathbf{y}}_k$ (index of runner-up class after defense).
Since we do not know the index of the predicted class and the runner-up class of $\tilde{\mathbf{y}}$ before minimization, we use the index of the predicted class and the runner-up class of $\hat{\mathbf{y}}$ to construct $\nabla_{\mathbf{z}} \tilde{f}(\mathbf{z}(\mathbf{x}))$.
Hence, $i = \arg\max_k \hat{\mathbf{y}}_k$ and $j = \arg\max_{k \neq i} \hat{\mathbf{y}}_k$.

The optimal solution for the minimization problem \eqref{eq:gc2} with respect to $\nabla_{\mathbf{z}}^{*}\tilde{f}(\mathbf{z}(\mathbf{x}))$ is as follows:
\begin{equation}
\label{eq:optimal_grad}
\nabla_{\mathbf{z}}^{*}\tilde{f}(\mathbf{z}(\mathbf{x}))
=
-\alpha\, J(\mathbf{x})^{\top} J(\mathbf{x})\,\nabla_{\mathbf{z}} f(\mathbf{z}(\mathbf{x})) .
\end{equation}

where $\alpha \rightarrow \infty$. In our experiments, we found that the norm of $\nabla_{\mathbf{z}}^{*} \tilde{f}(\mathbf{z}(\mathbf{x}))$ is not important for the success of GCD. Thus, we choose $\alpha = 1$. Since we have $J(\mathbf{x})$ and $\nabla_{\mathbf{z}} f(\mathbf{z}(\mathbf{x}))$, we can compute $\nabla_{\mathbf{z}}^{*} \tilde{f}(\mathbf{z}(\mathbf{x}))$. Because the equation
$\nabla_{\mathbf{z}} \tilde{f}(\mathbf{z}(\mathbf{x})) = \nabla_{\mathbf{z}}^{*} \tilde{f}(\mathbf{z}(\mathbf{x}))$
with respect to $\tilde{\mathbf{y}}$ is a nonlinear system, we cannot solve it directly to obtain
$\tilde{\mathbf{y}} = \tilde{\mathcal{M}}(\mathbf{x})$. Moreover, the constraints
$\| \mathcal{M}(\mathbf{x}) - \tilde{\mathcal{M}}(\mathbf{x}) \|_1 \leq \xi$ and $\tilde{\mathbf{y}} \in \Delta^K$
must also be considered.

To find $\tilde{\mathbf{y}}$, GCD employs a solution that (i) returns a valid probability vector $\tilde{\mathbf{y}} \in \Delta^K$,
(ii) keeps $\tilde{\mathbf{y}} = \tilde{\mathcal{M}}(\mathbf{x})$ close to $\hat{\mathbf{y}} = \mathcal{M}(\mathbf{x})$ to satisfy the utility constraint,
and (iii) preserves the predicted label of the target model ($\arg\max_k \hat{\mathbf{y}}_k = \arg\max_k \tilde{\mathbf{y}}_k$).
GCD solves the following minimization problem to obtain $\tilde{\mathbf{y}}$:
\begin{equation}
\label{eq:gc6}
\begin{aligned}
	\underset{\tilde{\mathbf{y}}}{\arg\min} \quad &
	-\left\langle \nabla_{\mathbf{z}} \tilde{f}(\mathbf{z}(\mathbf{x})), \nabla_{\mathbf{z}}^{*} \tilde{f}(\mathbf{z}(\mathbf{x})) \right\rangle \\
	\text{s.t.} \quad & \| \mathcal{M}(\mathbf{x}) - \tilde{\mathcal{M}}(\mathbf{x}) \|_1 \leq \xi .
\end{aligned}
\end{equation}
The inner product in \eqref{eq:gc6} encourages $\nabla_{\mathbf{z}} \tilde{f}(\mathbf{z}(\mathbf{x}))$ to align with
$\nabla_{\mathbf{z}}^{*} \tilde{f}(\mathbf{z}(\mathbf{x}))$ as much as possible, subject to the constraint
$\| \mathcal{M}(\mathbf{x}) - \tilde{\mathcal{M}}(\mathbf{x}) \|_1 \leq \xi$.
After solving the minimization problem in \eqref{eq:gc6}, the resulting $\tilde{\mathbf{y}}$ is normalized by dividing by its sum to ensure that
$\tilde{\mathbf{y}} \in \Delta^K$.
GCD solves the minimization problem in \eqref{eq:gc6} using a trust-region interior-point method \citep{DBLP:books/siam/ConnGT00}, as implemented in the \texttt{scipy} package \citep{virtanen2019scipy}.
In rare cases, the index of the largest component of the resulting $\tilde{\mathbf{y}}$ differs from that of $\hat{\mathbf{y}}$, which reduces the accuracy of the target model. To address this issue, GCD manually adjusts $\tilde{\mathbf{y}}$. Let $i = \arg\max_k \hat{\mathbf{y}}_k$ and $j = \arg\max_k \tilde{\mathbf{y}}_k$. If $i \neq j$, GCD adds $\tilde{\mathbf{y}}_j - \tilde{\mathbf{y}}_i$ to $\tilde{\mathbf{y}}_i$ and subtracts it from $\tilde{\mathbf{y}}_j$. This guarantees that $\arg\max_k \hat{\mathbf{y}}_k = \arg\max_k \tilde{\mathbf{y}}_k$, thereby preserving the accuracy of the target model.

\section{Experiments}
\label{sec:eval_sadeghzadeh}
\subsection{Experimental Setup}

\paragraph{Datasets and Models.}
We evaluate our proposed method, Gradient Confusion Defense across two standard benchmark datasets: CIFAR-10 and ImageNet. To ensure comprehensive assessment, we test on multiple model architectures. For CIFAR-10, we utilize pretrained ResNet-18 models. For ImageNet, we employ Torchvision pretrained ResNet-50 and Vision Transformer (ViT-B/16). All evaluations are performed on $N=1000$ randomly sampled test images per dataset, ensuring statistical significance while maintaining computational efficiency.

\paragraph{Baseline Defenses.}
We compare GCD against five established defense methods implemented in our experimental framework: input Random Noise Defense (iRND)~\citep{Qin2021rnd} applies Gaussian noise with $\sigma=0.01$ to input images. Boundary Defense (oRND)~\citep{Aithal2022ornd} adds Gaussian noise with $\sigma=1.0$ to output logits. Randomized Feature Defense (RFD)~\citep{HungQuang2024rfd} injects noise into intermediate features. Random Logit Scaling (RLS)~\citep{dashtbani2025rls} multiplies logits by a uniformly sampled scale in $[0.5, 10]$. For Adaptive Attack on Attackers (AAA), we implement AAA-sine~\citep{Chen2022aaa} with temperature $1$, deviation $0.5$, attractor interval $4$, calibration weight $5$, learning rate $0.1$, and $100$ inner iterations. We only test AAA-sine, following~\citet{Chen2022aaa} which showed AAA-linear is vulnerable to adaptive attacks.

All hyperparameters were selected based on established literature or thorough validation to ensure a fair comparison. Our defense replaces the model output with the optimized probability vector obtained from Equation~\eqref{eq:gc6}, using an output distortion budget of $\xi = 0.5$. Since the dimension of $\tilde{\mathbf y}$ is large for ImageNet and the trust-region interior-point method is a second-order optimizer, minimizing Equation~\eqref{eq:gc6} incurs a moderate computational cost. Therefore, for ImageNet we select a subset $\mathcal{S}$ of 20 indices of $\tilde{\mathbf y}$ and optimize Equation~\eqref{eq:gc6} only over these components, while fixing the remaining components of $\tilde{\mathbf y}$ to their corresponding values in $\hat{\mathbf y}$. To construct $\mathcal{S}$, we use $\nabla_{\mathbf z}^* \tilde{f}(\mathbf z(\mathbf{x}))$ and select the top-10 and bottom-10 components, which we found to have the largest effect on the objective in Equation~\eqref{eq:gc6}. As a result, for ImageNet the optimized $\tilde{\mathbf y}$ differs from $\hat{\mathbf y}$ in only 20 components (before normalization).



\paragraph{Attacks Configuration.}
We evaluate defenses against four state-of-the-art score-based black-box attacks in untargeted settings: BanditAttack~\citep{Ilyas2019bandit} ($\ell_2$ and $\ell_\infty$ variants), BrusleAttack~\citep{Vo2024brusle} ($\ell_0$ variant), SignAttack~\citep{AlDujaili2020signattack} ($\ell_\infty$ variant), and SquareAttack~\citep{Andriushchenko2020square} ($\ell_2$ and $\ell_\infty$ variants). For CIFAR-10, inputs are scaled to $[0,255]$ with $\ell_\infty$ budget $\epsilon=12.75$ ($0.05$ normalized) and $\ell_2$ budget $\epsilon=255$ ($1.0$ normalized). For ImageNet, inputs are in $[0,1]$ with $\ell_\infty$ budget $\epsilon=0.05$, and $\ell_2$ budgets follow the original implementations (Bandit: $\epsilon=1275$, Square: $\epsilon=5$). All attacks are constrained by a maximum query budget of $Q = 1000$ queries per sample.

\paragraph{Metrics.}
We report three metrics for each (dataset, model, defense, attack) configuration: \textit{Attack Success Rate}, the fraction of samples successfully attacked within the query budget; \textit{Average Queries}, the mean queries per successful attack; and \textit{Clean Accuracy Drop} for each defended model. Together, they capture robustness and utility, enabling evaluation of the defense--utility trade-off.


\begin{table*}[t]
\caption{Black-box attack results on CIFAR-10. ASR and average number of queries are reported.}
\label{tab:bb_attacks_cifar}
\begin{center}
\footnotesize
\setlength{\tabcolsep}{2pt}
\resizebox{\textwidth}{!}{%
\begin{tabular}{@{} l l c c c c c c c @{}}
\toprule
Model & Defense & Acc. Drop
& \multicolumn{6}{c}{Attacks (Success \% / Avg. Queries)} \\
\cmidrule(lr){4-9}
& & & Bandit-$\ell_2$ & Bandit-$\ell_\infty$ & Square-$\ell_2$ & Square-$\ell_\infty$ & Sign-$\ell_\infty$ & BrusLe-$\ell_0$ \\
\midrule

\multirow{6}{*}{ResNet-18} & None & -  & 82.68\% / 545.68 & 98.03\% / 186.16 & 87.76\% / 466.29 & 99.38\% / 112.46 & 98.97\% / 230.05 & 79.78\% / 234.82\\
& iRND & 1\%  & 11.63\% / 333.97 & 41.47\% / 186.49 & 24.92\% / 290.21 & 80.52\% / 178.96 & 66.49\% / 233.48 & 82.9\% / 220.76\\
& oRND & 0.6\%  & 12.53\% / 354.43 & 47.70\% / 158.43 & 20.98\% / 188.31 & 64.09\% / 113.92 & 38.83\% / 170.05 & 68.6\% / 201.10\\
& RLS  & 0\%  & 04.15\% / 345.9 & \textbf{25.11\%} / 23.96 & 12.76\% / 268.69 & 47.31\% / 90.88 & \underline{21.06\%} / 111.41 & \underline{51.77\%} / 316.57\\
& AAA  & 0.1\%  & \underline{03.33\%} / 318 & 38.08\% / 132.26 & \underline{9.77\%} / 174.91 & \underline{43.41\%} / 82.04 & 32.09\% / 177.10 & 66.78\% / 290.17\\
& GCD   & 0\%  & \textbf{0.94\%} / 131.66 & \underline{28.12\%} / 99.37 & \textbf{4.15\%} / 20.95 & \textbf{30.71\%} / 11.10 & \textbf{14.53\%} / 72.14 & \textbf{38.59\%} / 269.82\\
\bottomrule
\end{tabular}
}
\end{center}
\vskip -0.1in
\end{table*}

\begin{table*}[t]
\caption{Black-box attack results on ImageNet. ASR and average number of queries are reported.}
\label{tab:bb_attacks_imagenet}
\begin{center}
\setlength{\tabcolsep}{2pt}
\resizebox{\textwidth}{!}{%
{\footnotesize 
\begin{tabular}{@{} l l c c c c c c @{}}
\toprule
Model & Defense & Acc. Drop
& \multicolumn{5}{c}{Attacks (Success \% / Avg. Queries)} \\
\cmidrule(lr){4-8}
& & & Bandit-$\ell_2$ & Bandit-$\ell_\infty$ & Square-$\ell_2$ & Square-$\ell_\infty$ & Sign-$\ell_\infty$ \\
\midrule

\multirow{6}{*}{ResNet-50} & None & -  & 46.73\% / 761.22 & 87.05\% / 251.79 & 81.45\% / 483.65 & 99.47\% / 133.90 & 91.33\% / 328.71 \\
& iRND & 0.2\%  & 11.02\% / 440.30 & 45.56\% / 141.79 & 26.66\% / 214.17 & 83.49\% / 126.10 & 60.78\% / 229.78 \\
& oRND & 1.7\%  & 51.47\% / 229.94 & 75.95\% / 103.83 & 59.84\% / 208.49 & 83.52\% / 96.91 & 69.47\% / 182.20 \\
& RLS  & 0\%  & \underline{6.68\%} / 618.42 & 47.80\% / 143.72 & 16.29\% / 143.03 & 59.82\% / 104.07 & \underline{21.37\%} / 109.13 \\
& AAA  & 0.3\%  & 11.80\% / 356.96 & \underline{37.94\%} / 85.10 & \underline{15.69\%} / 32.05 & \textbf{47.06\%} / 33.59 & 23.6\% / 109.37 \\
& GCD   & 0\%  & \textbf{3.34\%} / 528.6 & \textbf{35.65\%} / 103.29 & \textbf{13.22\%} / 64.05 & \underline{47.94\%} / 32.17 & \textbf{20.17\%} / 90.15 \\
\midrule

\multirow{6}{*}{ViT-B/16} & None & -  & 38.17\% / 801.41 & 76.46\% / 319.75 & 70.64\% / 566.65 & 97.90\% / 351.15 & 80.05\% / 597.98 \\
& iRND & 0.2\%  & 7.78\% / 471.24 & 25.13\% / 169.19 & 21.77\% / 287.48 & 61.70\% / 252.02 & 38\% / 255.65 \\
& oRND & 1.8\%  & 46.71\% / 266.73 & 69.17\% / 191.07 & 58.88\% / 291.56 & 97.32\% / 84.39 & 81.44\% / 313.67  \\
& RLS  & 0\%  & \underline{4.47\%} / 589.5 & 28.51\% / 224.63 & \textbf{10.29\%} / 148.42 & \underline{30.36\%} / 144.31 & \textbf{13.39\%} / 184.20 \\
& AAA  & 0\%  & 8.17\% / 300.13 & \underline{22.28\%} / 102.86 & \underline{10.9\%} / 43.72 & \textbf{26.74\%} / 92.46 & \underline{15.1\%} / 236.60 \\
& GCD   & 0\%  & \textbf{3.35\%} / 609 & \textbf{21.94\%} / 140.03 & 12.38\% / 455.74 & 33.83\% / 257.21 & 21.44\% / 315.70 \\
\bottomrule
\end{tabular}
}
}
\end{center}
\vskip -0.1in
\end{table*}

\subsection{Results and Analysis}

\paragraph{Overall Performance.}
\cref{tab:bb_attacks_cifar,tab:bb_attacks_imagenet} summarize results on CIFAR-10 (ResNet-18) and ImageNet (ResNet-50, ViT-B/16). Across settings, GCD achieves a strong robustness--utility trade-off, preserving clean accuracy while reducing attack success rates. On CIFAR-10, GCD lowers Square-$\ell_2$ ASR from 87.76\% (undefended) to 4.15\%, and Sign-$\ell_\infty$ from 98.97\% to 14.53\%. This effectiveness transfers to ImageNet: for ResNet-50, GCD incurs no clean accuracy drop while maintaining strong robustness. Results on ViT-B/16 further confirm robustness; for example, Square-$\ell_\infty$ ASR decreases from 97.90\% to 33.83\%.

\paragraph{Comparison with Other Defenses.}
\cref{tab:bb_attacks_cifar,tab:bb_attacks_imagenet} also show that on non-transformer models, GCD outperforms nearly all baselines. While randomized defenses (iRND, oRND, RLS) can reduce attack success, they often incur accuracy drops or weaker protection; iRND and oRND trade accuracy for limited robustness. AAA and RLS preserve accuracy and can be competitive, but are more vulnerable to adaptive attacks (Section \ref{subsec:adaptive}). Table \ref{l1compare} reports $\ell_1$ deviations over 1000 ImageNet samples: GCD exhibits a larger shift, consistent with shaping scores to misalign the induced adversarial loss gradient rather than merely injecting noise.

\begin{table}[t]
  \caption{$\ell_1$ distance between the original output of the model $\hat{y}$ and defense output $\tilde{y}$ for 1000 ImageNet samples.}
  \label{l1compare}
  \centering
  \footnotesize
  \begin{tabular}{lccc}
    \toprule
    Defense & Min $\ell_1$ & Avg $\ell_1$ & Max $\ell_1$ \\
    \midrule
    iRND & 0.00 & 0.11 & 1.03 \\
    oRND & 0.00 & 0.26 & 1.77 \\
    RLS  & 0.00 & 0.33 & 1.74 \\
    AAA  & 0.00 & 0.28 & 1.60 \\
    GCD  & 0.07 & 0.58 & 1.25 \\
    \bottomrule
  \end{tabular}
  \vskip -0.1in
\end{table}

\begin{figure*}[t]
  \vskip 0.2in
  \centering
  \includegraphics[width=\textwidth]{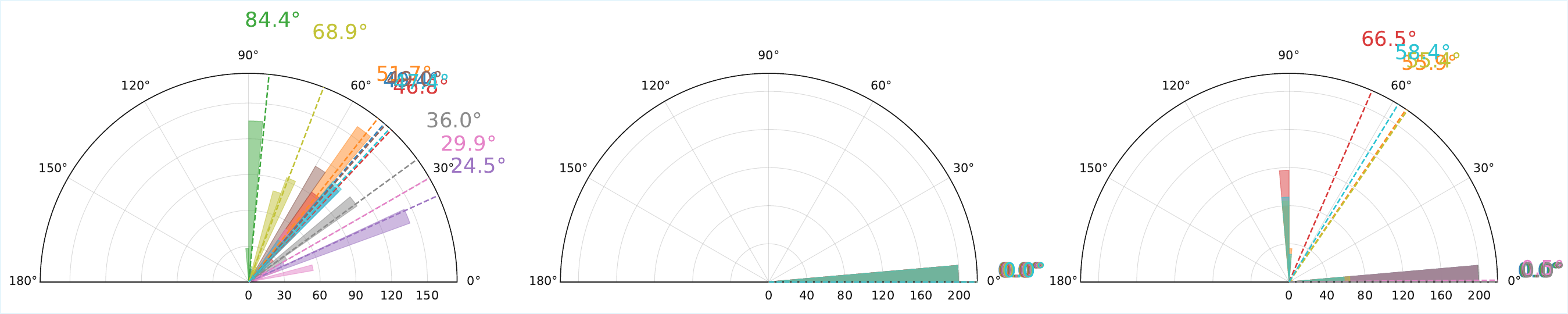}
  \vspace{-0.2em}
  \begin{minipage}{0.32\textwidth}
    \centering
    \small (a) GCD
  \end{minipage}
  \hfill
  \begin{minipage}{0.32\textwidth}
    \centering
    \small (b) AAA
  \end{minipage}
  \hfill
  \begin{minipage}{0.32\textwidth}
    \centering
    \small (c) RLS
  \end{minipage}
  \vspace{0.4em}
  \includegraphics[width=0.6\textwidth]{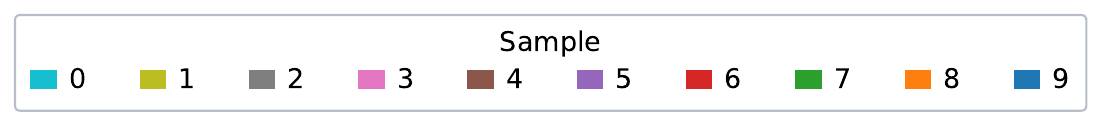}
  \caption{
    \textbf{Polar histograms of gradient direction misalignment.}
For 10 randomly selected samples from the ImageNet test set, we plot the angular deviation
$\theta=\angle(\nabla_{\mathbf x} f(\mathbf x), \nabla_{\mathbf x} \tilde{f}(\mathbf x))$
between the direction $\nabla_{\mathbf x} \tilde{f}(\mathbf x)$ induced by the defended score responses
(i.e., the direction the attacker would follow) and the true loss gradient
$\nabla_{\mathbf x} f(\mathbf x)$.
Each color corresponds to 200 square-$\ell_\infty$ attack iterations (200 queries used to generate adversarial example for a sample).
Dashed rays and numeric labels indicate the mean deviation for each sample.
}

  \label{fig:histograms_abopt_atv}
\end{figure*}

\paragraph{Gradient Misalignment Evidence.}
Figure~\ref{fig:histograms_abopt_atv} shows $\theta=\angle(\nabla_{\mathbf x} f(\mathbf x), \nabla_{\mathbf x} \tilde{f}(\mathbf x))$, the deviation between the true loss gradient $\nabla_{\mathbf x} f(\mathbf x)$ and the gradient $\nabla_{\mathbf x} \tilde{f}(\mathbf x)$ implied by the defended score responses (i.e., the direction the attacker follows). GCD induces a \emph{systematic} bias in this recovered direction, increasing the query cost of score-based attacks. In particular, GCD yields large deviations across queries (e.g., $24.5^\circ$--$84.4^\circ$, many near $\sim 50^\circ$), whereas AAA concentrates at $\theta\approx 0^\circ$ (nearly revealing the true direction). RLS shows mixed behavior, including near-zero angles that still provide a reliable descent direction and angles near $90^\circ$. We hypothesize this occurs because the RLS transform maps some target-model output probability vectors to one-hot vectors, making $\nabla_{\mathbf x} \tilde{f}(\mathbf x) \approx 0$. As a result, GCD’s persistent misalignment compounds over iterations, increasing query complexity and reducing attack success, while defenses that often yield $\theta\approx 0^\circ$ preserve a low-query path. These observations support our theoretical analysis.


Figure \ref{fig:trajectory} demonstrates trajectory-level analysis for 10 random CIFAR‑10 samples over 100 Square Attack iterations: for each query iteration $t$ along the attack trajectory, we measure the angle $\theta_t$ between the direction exposed under the defense and the corresponding undefended direction. This is the operational quantity that matters to an iterative score-based attacker. Based on these results, AAA does exhibit occasional local reversals, but they are not persistent: after a brief transient, its mean angle quickly settles to a small range (about $8^\circ$--$10^\circ$ in our trace plot), and many sample traces remain near low angles. In contrast, GCD maintains substantially larger misalignment throughout the trajectory, with the mean staying around $25^\circ$--$28^\circ$, and many samples remaining in the $30^\circ$--$45^\circ$ range over tens of queries. So the practical difference is not whether a defense can reverse a local slope at isolated points, but whether it sustains input-space misalignment along the whole optimization path. GCD does; AAA usually does not.
This also explains the mechanism. AAA heuristically flips local slopes in output space, but it does not optimize the induced input-space direction after composition with the model Jacobian. GCD is Jacobian-aware: Eqs. \eqref{eq:mainproblem}, \eqref{eq:optimal_grad}, and \eqref{eq:gc6} explicitly optimize the exposed direction in input space. That is precisely the quantity that matters for score-based attacks. This interpretation is also consistent with our theory: Theorem \ref{thm:rs_switch_failure} shows that larger angular misalignment increases the probability of accept/reject errors in search-based attacks, and when that larger angle is maintained across many iterations, those errors compound over the trajectory.

\begin{figure}[t]
    \centering
    \includegraphics[width=0.75\columnwidth]{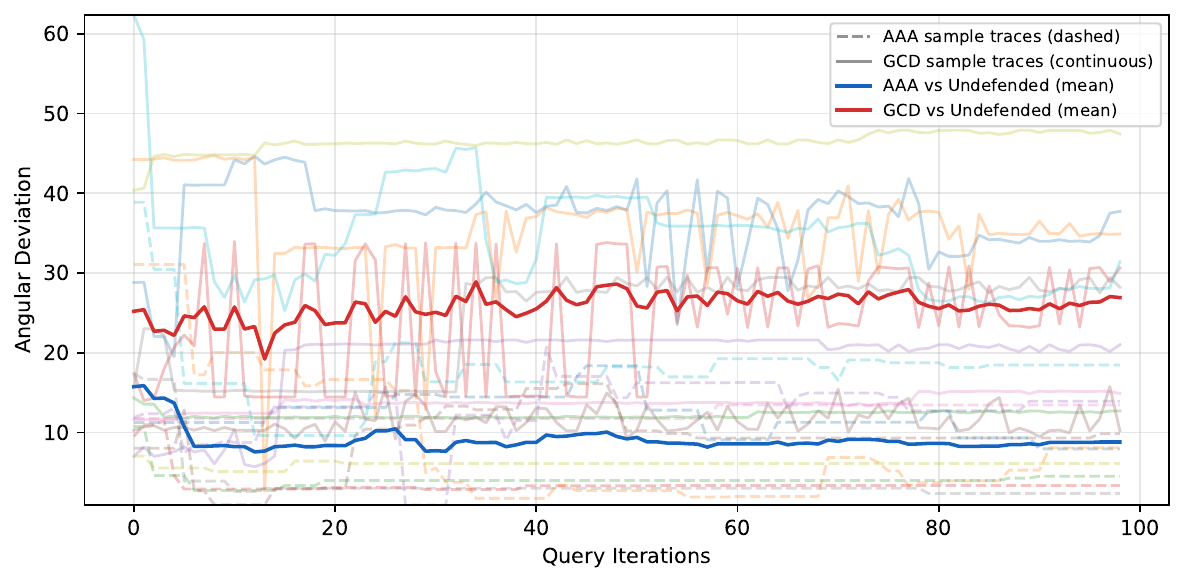}
    \caption{Angular deviation between defense-exposed and undefended gradient directions over 100 Square Attack iterations on 10 CIFAR-10 samples.}
    \label{fig:trajectory}
\end{figure}

\begin{table}[t]
  \caption{Adaptive Attacks on GCD, RLS, and AAA. We incorporate two adaptive techniques—Expectation over Transformation (EOT) and Pendulum—into the Square Attack framework. For both adaptive variants, we set $k=10$ and evaluate performance on $1000$ ImageNet samples using ViT-B/16.}
  \label{adaptive}
  \begin{center}
    \begin{small}
      \begin{sc}
      \resizebox{0.55\columnwidth}{!}{%
        \begin{tabular}{lccc}
          \toprule
          Defense  & EOT & Pen-Square & Max(EOT,Pen)  \\
          \midrule
          AAA & 21.94\% / 85.44 & 40.53\% / 306.08 & 40.53\% \\
          RLS & 36.19\% / 215.83 & 23.55\% / 85.44 & \underline{36.19\%} \\
          GCD & 24.04\% / 159.69 & 30.12\% / 209.33 & \textbf{30.12\%} \\
          \bottomrule
        \end{tabular}
        }
      \end{sc}
    \end{small}
    \end{center}
  \vskip -0.1in
\end{table}

\subsection{Adaptive Attack Evaluation}
\label{subsec:adaptive}
To assess defense resilience against adaptive adversaries, we evaluate GCD alongside AAA and RLS against two adaptive attacks: Expectation over Transformation (EOT) \citep{Athalye2018eot} and PendulumAttack \citep{dashtbani2025rls}.
For sample $\mathbf x$, EOT counteracts random-based defenses by averaging over \(k\) independently randomized defense outputs $(\frac{1}{k} \sum^{i=0}_{k} \tilde{\mathcal{M}}^{(i)})$ to obtain an unbiased estimatimation of the $\hat{\mathbf y} = \mathcal{M}(\mathbf x)$. PendulumAttack employs iterative estimation and exploitation of defense patterns.

Table~\ref{adaptive} reveals that GCD maintains performance even under adaptive settings. The primary result is reported in the Max column, which corresponds to a worst-case adaptive threat model in which the attacker uses the more successful of the two evaluated adaptive variants. Under this setting, GCD exhibits stronger robustness than both AAA and RLS.

Figure~\ref{fig:trajectory} provides a mechanism-level interpretation of this result. GCD introduces a deterministic bias into the exposed objective, whereas randomization-based defenses primarily introduce variance and AAA primarily creates local slope perturbations. Adaptive averaging is designed to reduce variance; it does not remove a deterministic directional bias. For a deterministic defense, the relevant question is therefore whether the exposed direction remains persistently displaced along the attack trajectory. Figure~\ref{fig:trajectory} shows that GCD maintains this displacement, while AAA realigns much more quickly.

This evidence helps explain the observed adaptive comparison, although it does not by itself establish causality. Adaptive attacks may smooth local irregularities or exploit predictable patterns, but they still optimize the systematically altered objective exposed by GCD. By contrast, AAA transforms the loss into a structured oscillatory surface. Pendulum-style attacks can exploit such structure by reversing the optimization direction when progress stalls, thereby escaping local minima and maxima. The trajectory plot therefore complements Table~\ref{adaptive}, it indicates that GCD's effect is sustained across successive queries rather than being limited to isolated local slope changes.





\section{Discussion and Conclusion}
Our comprehensive evaluation across two datasets, multiple model architectures, four attack methods, and two threat models demonstrates that Gradient Confusion represents an effective new paradigm for black-box defense. By explicitly shaping the gradient information revealed to attackers, GCD Consistently lower attack success rates across all evaluated configurations. Besides, we do not degrade the clean accuracy. Also, GCD Maintains protection even against sophisticated adaptive attacks.
These results establish GCD as a state-of-the-art defense against score-based black-box attacks, offering a principled alternative to randomization-based approaches. The method's model-agnostic implementation and deterministic operation make it particularly suitable for deployment in security-critical applications where both robustness and reliable performance are paramount.
However, one limitation warrants discussion. GCD requires Jacobian computations, adding computational overhead (approximately 15\% inference time increase). Future work could explore alternative optimization methods and workarounds to address this obstacle.


\bibliographystyle{cas-model2-names}
\bibliography{example_paper}
\clearpage
\appendix
\section{Notations}
Here one can find all notations and symbols used in the paper.

\begin{table}[H]
    \centering
    \caption{Summary of Notations and Symbols used in the Gradient Confusion Defense (GCD) paper.}
    \label{tab:notations}
    \renewcommand{\arraystretch}{1.3}
    \small
    \begin{tabular}{l p{10cm}}
        \toprule
        \textbf{Symbol} & \textbf{Description} \\
        \midrule
        \multicolumn{2}{c}{\textit{General Model and Inputs}} \\
        \midrule
        $\mathcal{X}$ & Input space, $\mathcal{X} \subseteq [0,1]^d$ \\
        $d$ & Dimension of the input space \\
        $[K]$ & Label set $\{1, \dots, K\}$ \\
        $\mathcal{M}$ & Classifier mapping $\mathcal{X} \rightarrow \Delta^K$ \\
        $\mathbf{z}(\mathbf{x})$ & Logits vector output by the model for input $\mathbf{x}$ \\
        $\sigma(\cdot)$ & Softmax function \\
        $\Delta^K$ & Probability simplex $\{\mathbf{p} \in \mathbb{R}^K \mid \sum_i \mathbf{p}_i = 1\}$ \\
        $\hat{\mathbf{y}}, \mathcal{M}(\mathbf{x})$ & Original probability vector output by the target model \\
        $k$ & True label of input $\mathbf{x}$ \\
        $J(\mathbf{x})$ & Jacobian of the model's logits with respect to the input, $\nabla_\mathbf{x} \mathbf{z}(\mathbf{x})$ \\
        \midrule
        \multicolumn{2}{c}{\textit{Attacks and Optimization}} \\
        \midrule
        $\mathbf{x}_{adv}$ & Adversarial example ($\mathbf{x}_{adv} = \mathbf{x} + \boldsymbol{\delta}$) \\
        $R$ & Perturbation budget (radius of the allowed perturbation ball) \\
        $Q$ & Maximum query budget \\
        $f(\mathbf{x})$ & Adversarial loss function based on the original model output \\
        $i, j$ & Indices of the largest and second-largest probability components \\
        $\hat{\nabla}_\mathbf{x} f_\mu(\mathbf{x})$ & Estimated gradient of the smoothed function \\
        $\mathbf{u}, \mathbf{u}_t$ & Random search direction vector, $\mathbf{u} \sim \mathcal{N}(\mathbf{0}, \mathbf{I})$ \\
        $\mu$ & Smoothing parameter (step size for finite differences) \\
        $f_\mu(\mathbf{x})$ & Gaussian-smoothed version of the loss function \\
        $\eta_t$ & Step size (learning rate) at iteration $t$ \\
        $\Pi_\xi$ & Projection operator onto the $l_p$-ball of radius $\xi$ \\
        $h_t$ & One-step improvement signal in random search \\
        \midrule
        \multicolumn{2}{c}{\textit{Gradient Confusion Defense (GCD)}} \\
        \midrule
        $\tilde{\mathbf{y}}, \tilde{\mathcal{M}}(\mathbf{x})$ & Modified/Defended probability vector returned to the user \\
        $\tilde{\mathbf{z}}(\mathbf{x})$ & Perturbed logits corresponding to $\tilde{\mathbf{y}}$ \\
        $\tilde{f}(\mathbf{x})$ & Adversarial loss function based on the modified output $\tilde{\mathbf{y}}$ \\
        $\xi$ & Distortion budget (constraint on $l_1$ distance between $\hat{\mathbf{y}}$ and $\tilde{\mathbf{y}}$) \\
        $f_{\text{def}}(\mathbf{x})$ & Randomized defended objective function (theoretical model) \\
        $\lambda$ & Bernoulli random variable used in the theoretical switching defense \\
        $p$ & Probability parameter for the Bernoulli distribution ($\lambda \sim \text{Bernoulli}(p)$) \\
        \midrule
        \multicolumn{2}{c}{\textit{Theoretical Analysis}} \\
        \midrule
        $L_0(f)$ & Lipschitz continuity constant of function $f$ \\
        $L_1(f)$ & Lipschitz constant of the gradient (smoothness constant) of $f$ \\
        $T$ & Total number of optimization steps/queries \\
        $\epsilon$ & Small constants bounding the difference between functions ($|f - \tilde{f}|$) \\
        $\epsilon'$ & Small constants bounding the difference between functions ($|f_\mu - f|$) \\
        $h_{\text{def}}$ & Improvement signal computed from the defender's returned values \\
        \bottomrule
    \end{tabular}
\end{table}

\section{Theoretical Analysis of Gradient Confusion for Gradient Estimation-Based Attacks}
\label{apx:conv-theorem-proof}

In this appendix, we provide the theoretical analysis of Theorem~\ref{thm:prob-converge}.
We show that by incorporating a second function $\tilde{f}$, which is close to the original function $f$ but has misaligned gradients, we can increase the convergence bound of gradient-estimation-based attacks.
This analysis provides insights into how optimizing a confusion term can enhance the robustness of models against such attacks.
We first state preliminaries and lemmas used in the proof, and then present the detailed proof of Theorem~\ref{thm:prob-converge}.

\subsection{Preliminaries}
\begin{definition}[$\mu$-Gaussian Smoothing]
For a function $f:\mathbb{R}^d \to \mathbb{R}$, we define its $\mu$-Gaussian smoothed version as:
\begin{equation}
    f_\mu(\mathbf{x}) = \mathbb{E}_{\mathbf{u} \sim \mathcal{N}(\mathbf{0}, \mathbf{I_d})}[f(\mathbf{x} + \mu \mathbf{u})]
\end{equation}
\end{definition}

Now, we define the gradient estimator as
\begin{equation}
g_\mu(\mathbf{x}) = \frac{f(\mathbf{x}+\mu \mathbf{u}) - f(\mathbf{x})}{\mu}\,\mathbf{u},
\end{equation}
where $\mathbf{u} \sim \mathcal{N}(\mathbf{0}, \mathbf{I}_d)$.
Using the change of variables $\mathbf{y} := \mathbf{x} + \mu \mathbf{u}$, we can show that the gradient of the smoothed function can be written as
\begin{equation}
    \begin{aligned}
        \nabla f_\mu(\mathbf{x})
        &= \nabla \left(\frac{1}{(2\pi)^{d/2}} \int f(\mathbf{x}+\mu \mathbf{u}) e^{-\frac{1}{2}\|\mathbf{u}\|^2} d\mathbf{u} \right) \\
        &= \nabla \left(\frac{1}{\mu^{d+1}(2\pi)^{d/2}} \int f(\mathbf{y}) e^{-\frac{1}{2\mu^2}\|\mathbf{y}-\mathbf{x}\|^2} d \mathbf{y} \right) & \qquad (\text{Change of variables})\\
        &= \frac{1}{\mu^{d+1}(2\pi)^{d/2}} \int f(\mathbf{y}) \left(-\frac{\mathbf{y}-\mathbf{x}}{\mu^2}\right) e^{-\frac{1}{2\mu^2}\|\mathbf{y}-\mathbf{x}\|^2} d \mathbf{y} \\
        &= \frac{1}{\mu^{d+1}(2\pi)^{d/2}} \int f(\mathbf{x}+\mu \mathbf{u}) \left(-\frac{\mu \mathbf{u}}{\mu^2}\right) e^{-\frac{1}{2}\|\mathbf{u}\|^2} \mu^d d \mathbf{u} & \qquad (\text{Change of variables})\\
        &= \frac{1}{(2\pi)^{d/2}} \int f(\mathbf{x}+\mu \mathbf{u}) \left(-\frac{\mathbf{u}}{\mu}\right) e^{-\frac{1}{2}\|\mathbf{u}\|^2} d \mathbf{u} \\
        &= \mathbb{E}_{\mathbf{u} \sim \mathcal{N}(\mathbf{0}, \mathbf{I_d})}\left[\frac{f(\mathbf{x}+\mu \mathbf{u})}{\mu} \mathbf{u}\right] & \qquad \left(\mathbb{E}_{\mathbf{u} \sim \mathcal{N}(\mathbf{0}, \mathbf{I_d)}}\left[\frac{f(\mathbf{x})}{\mu} \mathbf{u}\right] = 0\right)\\
        &= \mathbb{E}_{\mathbf{u} \sim \mathcal{N}(\mathbf{0}, \mathbf{I_d)}}\left[\frac{f(\mathbf{x}+\mu \mathbf{u}) - f(\mathbf{x})}{\mu} \mathbf{u}\right] \\
        &= \mathbb{E}_{\mathbf{u} \sim \mathcal{N}(\mathbf{0}, \mathbf{I_d})}[g_\mu(\mathbf{x})]
    \end{aligned}
\end{equation}
Thus, we have the following lemma:
\begin{lemma}
\label{lem:grad-estimator}
The gradient estimator $g_\mu(\mathbf{x})$ is an unbiased estimator of the gradient of the smoothed function $f_\mu(\mathbf{x})$, i.e., $\mathbb{E}_{\mathbf{u}}\!\left[g_\mu(\mathbf{x})\right] = \nabla f_\mu(\mathbf{x})$.
\end{lemma}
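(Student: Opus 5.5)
The proof is a direct computation that already appears in the display preceding the statement. We only need to justify each equality and check the interchange of differentiation and integration.

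\textbf{Step 1: rewrite the smoothed function as a convolution.} Starting from $f_\mu(\mathbf{x})=\mathbb{E}_{\mathbf{u}}[f(\mathbf{x}+\mu\mathbf{u})]$, we substitute $\mathbf{y}=\mathbf{x}+\mu\mathbf{u}$. This gives
\[
f_\mu(\mathbf{x})=(2\pi\mu^2)^{-d/2}\int f(\mathbf{y})\,e^{-\|\mathbf{y}-\mathbf{x}\|^2/(2\mu^2)}\,d\mathbf{y},
\]
so $\mathbf{x}$ now appears only inside the Gaussian kernel.

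\textbf{Step 2: differentiate under the integral sign.} The kernel's gradient is $\frac{\mathbf{y}-\mathbf{x}}{\mu^2}e^{-\|\mathbf{y}-\mathbf{x}\|^2/(2\mu^2)}$. Exchanging $\nabla$ and $\int$ needs a dominating function, and this is the only delicate point.
\begin{itemize}
\item By Assumption~\ref{assum:lipschitz}, $|f(\mathbf{y})|\le |f(\mathbf{0})|+L_0(f)\|\mathbf{y}\|$, so $f$ grows at most linearly.
\item For $\mathbf{x}$ ranging over a bounded neighborhood, $|f(\mathbf{y})|\,\|\mathbf{y}-\mathbf{x}\|\,e^{-\|\mathbf{y}-\mathbf{x}\|^2/(2\mu^2)}$ is dominated by an integrable function of the form $C(1+\|\mathbf{y}\|)^2e^{-c\|\mathbf{y}\|^2}$.
\item Dominated convergence then justifies the exchange.
\end{itemize}
In the paper's setting $f$ is a difference of probabilities, so $|f|\le 1$ and the bound is immediate.

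\textbf{Step 3: change variables back.} Substituting $\mathbf{y}=\mathbf{x}+\mu\mathbf{u}$ again yields
\[
\nabla f_\mu(\mathbf{x})=\mathbb{E}_{\mathbf{u}}\!\left[\frac{f(\mathbf{x}+\mu\mathbf{u})}{\mu}\,\mathbf{u}\right].
\]

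\textbf{Step 4: subtract the zero-mean baseline.} Since $\mathbb{E}[\mathbf{u}]=\mathbf{0}$ and $f(\mathbf{x})/\mu$ does not depend on $\mathbf{u}$, we have $\mathbb{E}_{\mathbf{u}}\big[\tfrac{f(\mathbf{x})}{\mu}\mathbf{u}\big]=\mathbf{0}$. Subtracting this term gives
\[
\nabla f_\mu(\mathbf{x})=\mathbb{E}_{\mathbf{u}}\!\left[\frac{f(\mathbf{x}+\mu\mathbf{u})-f(\mathbf{x})}{\mu}\,\mathbf{u}\right]=\mathbb{E}_{\mathbf{u}}[g_\mu(\mathbf{x})].
\]

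\textbf{Alternative route.} One can avoid the change of variables by using Gaussian integration by parts (Stein's lemma), $\mathbb{E}[\phi(\mathbf{u})\mathbf{u}]=\mathbb{E}[\nabla\phi(\mathbf{u})]$, applied to $\phi(\mathbf{u})=f(\mathbf{x}+\mu\mathbf{u})/\mu$. This is valid for Lipschitz $f$, which is differentiable almost everywhere by Rademacher's theorem. It gives $\mathbb{E}[g_\mu(\mathbf{x})]=\mathbb{E}[\nabla f(\mathbf{x}+\mu\mathbf{u})]=\nabla f_\mu(\mathbf{x})$.

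The main obstacle is the justification in Step 2; everything else is algebra.
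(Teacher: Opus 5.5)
Your proposal is correct and follows the same route as the paper's derivation: substitute $\mathbf{y}=\mathbf{x}+\mu\mathbf{u}$, differentiate the Gaussian kernel, substitute back, and drop the zero-mean baseline $\mathbb{E}_{\mathbf{u}}[f(\mathbf{x})\mathbf{u}/\mu]=\mathbf{0}$. Your version improves on the paper's display in two ways: it justifies the exchange of $\nabla$ and $\int$ by dominated convergence, and it uses the correct prefactor $(2\pi\mu^2)^{-d/2}$ and kernel gradient $+(\mathbf{y}-\mathbf{x})/\mu^2$, whereas the paper's intermediate lines have a stray $\mu^{d+1}$ and a minus sign that are only silently corrected later.
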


We also use the following lemma on the smoothness of the Gaussian-smoothed function from \citet{nesterov2017random}:
\begin{lemma}
\label{lem:smoothness}
If $f$ is $L_0(f)$-Lipschitz continuous, then its smoothed version $f_\mu$ is differentiable and has a Lipschitz continuous gradient with constant
\[
L_1(f_\mu) = \frac{2\sqrt{d}}{\mu} L_0(f).
\]
\end{lemma}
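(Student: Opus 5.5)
The plan is to work from the integral representation of $\nabla f_\mu$ derived just before Lemma~\ref{lem:grad-estimator}, namely
\[
\nabla f_\mu(\mathbf{x}) = \mathbb{E}_{\mathbf{u}\sim\mathcal N(\mathbf{0},\mathbf{I}_d)}\!\left[\frac{f(\mathbf{x}+\mu\mathbf{u})}{\mu}\,\mathbf{u}\right].
\]
The two pieces are: (i) justify that $f_\mu$ is differentiable and that this formula holds under the sole assumption that $f$ is $L_0(f)$-Lipschitz; (ii) bound the Lipschitz constant of the resulting gradient map directly.

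For (i), I would use the change of variables $\mathbf{y}=\mathbf{x}+\mu\mathbf{u}$. This moves the dependence on $\mathbf{x}$ entirely into the Gaussian kernel $e^{-\|\mathbf{y}-\mathbf{x}\|^2/(2\mu^2)}$, which is $C^\infty$ in $\mathbf{x}$. Since $f$ is Lipschitz it has at most linear growth, $|f(\mathbf{y})|\le |f(\mathbf{0})|+L_0(f)\|\mathbf{y}\|$. Hence, for $\mathbf{x}$ in a bounded neighborhood of any fixed $\mathbf{x}_0$, the $\mathbf{x}$-derivative of the integrand, $f(\mathbf{y})\,\frac{\mathbf{y}-\mathbf{x}}{\mu^2}\,e^{-\|\mathbf{y}-\mathbf{x}\|^2/(2\mu^2)}$, is dominated by an integrable function of $\mathbf{y}$ (a polynomial times a Gaussian with slightly enlarged variance). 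Dominated convergence then permits differentiating under the integral, which gives differentiability and the displayed formula. Continuity of the gradient follows the same way, though it is also implied by the Lipschitz bound below.

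For (ii), fix $\mathbf{x},\mathbf{x}'$ and subtract the two representations:
\[
\nabla f_\mu(\mathbf{x})-\nabla f_\mu(\mathbf{x}')=\frac{1}{\mu}\,\mathbb{E}\!\left[\bigl(f(\mathbf{x}+\mu\mathbf{u})-f(\mathbf{x}'+\mu\mathbf{u})\bigr)\mathbf{u}\right].
\]
Moving the norm inside the expectation (Jensen) and applying the Lipschitz property pointwise in $\mathbf{u}$ gives
\[
\|\nabla f_\mu(\mathbf{x})-\nabla f_\mu(\mathbf{x}')\|\le \frac{L_0(f)}{\mu}\,\|\mathbf{x}-\mathbf{x}'\|\;\mathbb{E}\|\mathbf{u}\|.
\]
Finally, $\mathbb{E}\|\mathbf{u}\|\le\sqrt{\mathbb{E}\|\mathbf{u}\|^2}=\sqrt d$. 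This yields the constant $\sqrt d\,L_0(f)/\mu$, which is at most the stated $2\sqrt d\,L_0(f)/\mu$, so the lemma holds with room to spare.

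The only step needing care is the interchange of gradient and integral in (i), because $f$ itself need not be differentiable. Putting the derivative onto the kernel, together with the linear-growth domination, resolves this. Everything else is a one-line estimate.
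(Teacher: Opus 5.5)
Your proof is correct. It does not compete with an argument in the paper, because the paper gives no proof of this lemma: it imports the result from Nesterov and Spokoiny. The only related computation in the paper is the formal derivation of $\nabla f_\mu(\mathbf{x})=\mathbb{E}[f(\mathbf{x}+\mu\mathbf{u})\mathbf{u}]/\mu$ before Lemma~\ref{lem:grad-estimator}. That derivation differentiates under the integral without justification, and its intermediate lines carry a stray factor $\mu^{-1}$ in the normalization and a sign slip in the kernel derivative, although the final formula is right.

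Citing buys brevity. Your version buys a self-contained argument under the bare Lipschitz hypothesis:
\begin{itemize}
\item Step (i) supplies exactly the missing justification. Moving the $\mathbf{x}$-dependence onto the Gaussian kernel and dominating via the linear growth of a Lipschitz $f$ is the right way to handle a non-differentiable $f$. The Lipschitz estimate of step (ii) on the vector of partial derivatives then gives continuity, hence $C^1$.
\item Step (ii) is the standard difference-of-representations estimate and yields $\sqrt{d}\,L_0(f)/\mu$. This shows the factor $2$ in the statement is slack. That factor is what one gets by differencing the centered representations $\mathbb{E}[(f(\mathbf{x}+\mu\mathbf{u})-f(\mathbf{x}))\mathbf{u}]/\mu$, which spends the Lipschitz bound twice.
\end{itemize}
The lemma is only used as an upper bound on $L_1(f_\mu)$ in the descent inequality of Appendix~\ref{apx:conv-theorem-proof}, so your sharper constant is fully compatible with the rest of the paper.

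You can even remove the dimension factor. Instead of applying Jensen, test the difference against a unit vector $\mathbf{v}$; because $\langle\mathbf{u},\mathbf{v}\rangle\sim\mathcal{N}(0,1)$,
\[
\bigl\langle \nabla f_\mu(\mathbf{x})-\nabla f_\mu(\mathbf{x}'),\mathbf{v}\bigr\rangle
\le \frac{L_0(f)}{\mu}\,\|\mathbf{x}-\mathbf{x}'\|\;\mathbb{E}\bigl|\langle\mathbf{u},\mathbf{v}\rangle\bigr|
=\sqrt{\frac{2}{\pi}}\,\frac{L_0(f)}{\mu}\,\|\mathbf{x}-\mathbf{x}'\|.
\]
Taking the supremum over $\mathbf{v}$ gives a dimension-free Lipschitz constant for $\nabla f_\mu$, which also implies the lemma as stated.
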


We also use the following lemma on the moments of Gaussian random variables, which follows from standard concentration properties of Gaussian distributions:
\begin{lemma}
\label{lem:gaussian-moments}
For $\mathbf{u} \sim \mathcal{N}(\mathbf{0}, \mathbf{I}_d)$, we have
\begin{equation}
\mathbb{E}\!\left[\|\mathbf{u}\|^2\right] \le d,
\end{equation}
and
\begin{equation}
\mathbb{E}\!\left[\|\mathbf{u}\|^4\right] \le (d+4)^2.
\end{equation}
\end{lemma}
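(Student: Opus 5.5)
The plan is to prove both bounds by direct computation, writing $\|\mathbf{u}\|^2=\sum_{i=1}^d u_i^2$ with $u_1,\dots,u_d$ i.i.d.\ standard normal. In fact both moments can be computed exactly, so no concentration inequality is needed, and the stated inequalities follow with room to spare.

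For the second moment, linearity of expectation together with $\mathbb{E}[u_i^2]=1$ gives $\mathbb{E}\|\mathbf{u}\|^2=d$. This gives the first claim, with equality.

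For the fourth moment, I expand the square:
\begin{equation*}
\|\mathbf{u}\|^4=\Bigl(\sum_{i=1}^d u_i^2\Bigr)^2=\sum_{i=1}^d u_i^4+\sum_{i\neq j}u_i^2u_j^2 .
\end{equation*}
Taking expectations, I use the standard Gaussian fourth moment $\mathbb{E}[u_i^4]=3$. For $i\neq j$, independence gives $\mathbb{E}[u_i^2u_j^2]=\mathbb{E}[u_i^2]\,\mathbb{E}[u_j^2]=1$. Hence
\begin{equation*}
\mathbb{E}\|\mathbf{u}\|^4=3d+d(d-1)=d^2+2d .
\end{equation*}
This agrees with the second moment of a $\chi^2_d$ variable, $\mathrm{Var}+\text{mean}^2=2d+d^2$.

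The last step is the comparison $d^2+2d\le d^2+8d+16=(d+4)^2$, which is immediate because $6d+16>0$.

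There is no real obstacle here. The only fact to cite is the fourth moment of a standard normal, which one can verify by integration by parts: $\mathbb{E}[u^4]=3\,\mathbb{E}[u^2]$. The looser constant $(d+4)^2$ appears only to match the form used in \citet{nesterov2017random}, where moments are bounded for general powers $p$.
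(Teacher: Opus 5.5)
Your proof is correct, and it takes a different route from the paper, which gives no proof at all. The paper only asserts that the lemma ``follows from standard concentration properties of Gaussian distributions.'' The particular form $(d+4)^2$ is the $p=4$ instance of the general Nesterov--Spokoiny bound $\mathbb{E}\|\mathbf{u}\|^p\le (p+d)^{p/2}$ for $p\ge 2$. That bound is proved by a general estimate of the Gaussian integral, not by exact computation.

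You instead expand $\bigl(\sum_i u_i^2\bigr)^2$ and use $\mathbb{E}[u_i^4]=3$ and independence. This gives the exact $\chi^2_d$ moments $\mathbb{E}\|\mathbf{u}\|^2=d$ and $\mathbb{E}\|\mathbf{u}\|^4=d^2+2d$, so you need neither a concentration argument nor an external citation.

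The cited approach gives one inequality valid uniformly for every power $p$. Yours is self-contained, shows the first bound is an equality, and shows the second is loose by $6d+16$. In this paper that exactness would let the constant $D$ in the convergence proof use $d^2+2d$ instead of $(d+4)^2$, making the later approximation $(d+4)^2\approx d^2$ less lossy.
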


\subsection{Bias and variance of gradient estimator}
Let $f:\mathbb{R}^d \to \mathbb{R}$ and $\tilde{f}:\mathbb{R}^d \to \mathbb{R}$ be two functions such that, for all $\mathbf{x} \in \mathbb{R}^d$, $|f(\mathbf{x}) - \tilde{f}(\mathbf{x})| \leq \epsilon$.
We define a randomized objective $h$ as
\begin{equation}
h(\mathbf{x}) = \lambda f(\mathbf{x}) + (1-\lambda)\tilde{f}(\mathbf{x}),
\end{equation}
where $\lambda \sim \mathrm{Bernoulli}(p)$.
Thus, the corresponding gradient estimator is given by
\begin{equation}
    \begin{aligned}
        g_\mu(\mathbf{x}) &= \frac{\left(\lambda_1f(\mathbf{x}+\mu \mathbf{u})+(1-\lambda_1)\tilde{f}(\mathbf{x}+\mu \mathbf{u})\right) - \left(\lambda_2f(\mathbf{x})+(1-\lambda_2)\tilde{f}(\mathbf{x})\right)}{\mu} \mathbf{u} \\
        &= \left[\lambda_1\frac{f(\mathbf{x}+\mu \mathbf{u}) - f(\mathbf{x})}{\mu} + (1-\lambda_1)\frac{\tilde{f}(\mathbf{x}+\mu \mathbf{u}) - \tilde{f}(\mathbf{x})}{\mu} + \frac{(\lambda_1-\lambda_2)(f(\mathbf{x}) - \tilde{f}(\mathbf{x}))}{\mu}\right] \mathbf{u}
    \end{aligned}
\end{equation}
Thus, for the expectation of the gradient estimator, we have:
\begin{equation}
    \begin{aligned}
        \mathbb{E}_{\lambda_1, \lambda_2}[g_\mu(\mathbf{x})] &= p \left(\frac{f(\mathbf{x}+\mu \mathbf{u}) - f(\mathbf{x})}{\mu} \mathbf{u}\right) + (1-p) \left(\frac{\tilde{f}(\mathbf{x}+\mu \mathbf{u}) - \tilde{f}(\mathbf{x})}{\mu} \mathbf{u} \right) \\
    \end{aligned}
\end{equation}
From lemma \ref{lem:grad-estimator} we have:
\begin{equation}
    \label{eq:exp_bias}
    \mathbb{E}_{\mathbf{u}, \lambda_1, \lambda_2}[g_\mu(\mathbf{x})] = p \nabla f_\mu(\mathbf{x}) + (1-p) \nabla \tilde{f}_\mu(\mathbf{x})
\end{equation}
Now we need to compute the variance of the gradient estimator:
\begin{equation}
    \label{eq:variance}
    \begin{gathered}
        \|g_\mu(\mathbf{x})\|^2 = \underbrace{\left[\lambda_1\frac{f(\mathbf{x}+\mu \mathbf{u}) - f(\mathbf{x})}{\mu} + (1-\lambda_1)\frac{\tilde{f}(\mathbf{x}+\mu \mathbf{u}) - \tilde{f}(\mathbf{x})}{\mu} + \frac{(\lambda_1-\lambda_2)(f(\mathbf{x}) - \tilde{f}(\mathbf{x}))}{\mu}\right]^2}_{A}\|\mathbf{u}\|^2 \\
    \end{gathered}
\end{equation}
Now set $\alpha = \frac{f(\mathbf{x}+\mu \mathbf{u}) - f(\mathbf{x})}{\mu}$, $\tilde{\alpha} = \frac{\tilde{f}(\mathbf{x}+\mu \mathbf{u}) - \tilde{f}(\mathbf{x})}{\mu}$, and $\delta = \frac{f(\mathbf{x}) - \tilde{f}(\mathbf{x})}{\mu}$. Thus for $A$ we have:
\begin{equation}
    \begin{gathered}
        A = \lambda_1^2 \alpha^2 + (1-\lambda_1)^2 \tilde{\alpha}^2 + (\lambda_1-\lambda_2)^2 \delta^2 + 2\lambda_1(1-\lambda_1)\alpha\tilde{\alpha} + 2\lambda_1(\lambda_1-\lambda_2)\alpha\delta + 2(1-\lambda_1)(\lambda_1-\lambda_2)\tilde{\alpha}\delta \\
        \implies \mathbb{E}_{\lambda_1, \lambda_2}[A] = p\alpha^2 + (1-p)\tilde{\alpha}^2 + 2p(1-p)(\alpha - \tilde{\alpha})\delta+ 2p(1-p)\delta^2 \\
        \hfil = p\alpha^2 + (1-p)\tilde{\alpha}^2 + 2p(1-p)(\alpha - \tilde{\alpha} + \delta)\delta \\[1em]
        \mathbb{E}_{\lambda_1, \lambda_2}[A] = p\left(\frac{f(\mathbf{x}+\mu \mathbf{u}) - f(\mathbf{x})}{\mu}\right)^2 + (1-p)\left(\frac{\tilde{f}(\mathbf{x}+\mu \mathbf{u}) - \tilde{f}(\mathbf{x})}{\mu}\right)^2 \\
        + \frac{2p(1-p)}{\mu^2}\left(f(\mathbf{x}+\mu \mathbf{u}) - \tilde{f}(\mathbf{x}+\mu \mathbf{u})\right)\left(f(\mathbf{x}) - \tilde{f}(\mathbf{x})\right) \\
    \end{gathered}
\end{equation}
Now using the bound $|f(\mathbf{x}) - \tilde{f}(\mathbf{x})| \leq \epsilon$, and Lipschitz continuity of $f$ and $\tilde{f}$, we have:
\begin{equation}
    \begin{gathered}
        \mathbb{E}_{\lambda_1, \lambda_2}[A] \leq pL^2_0(f)\|\mathbf{u}\|^2 + (1-p)L^2_0(\tilde{f})\|\mathbf{u}\|^2 + \frac{2p(1-p)\epsilon^2}{\mu^2}\\
    \end{gathered}
\end{equation}
Putting this into equation \ref{eq:variance}, we have:
\begin{equation}
    \mathbb{E}_{\lambda_1, \lambda_2}[\|g_\mu(\mathbf{x})\|^2] \le pL^2_0(f)\|\mathbf{u}\|^4 + (1-p)L^2_0(\tilde{f})\|\mathbf{u}\|^4 + \frac{2p(1-p)\epsilon^2}{\mu^2}\|\mathbf{u}\|^2
\end{equation}
Now taking expectation over $\mathbf{u}$, using lemma \ref{lem:gaussian-moments} we have:
\begin{equation}
    \label{eq:exp_variance}
    \mathbb{E}_{\mathbf{u}, \lambda_1, \lambda_2}[\|g_\mu(\mathbf{x})\|^2] \leq pL^2_0(f)(d+4)^2 + (1-p)L^2_0(\tilde{f})(d+4)^2 + \frac{2p(1-p)\epsilon^2}{\mu^2}d
\end{equation}

\subsection{Descent Lemma}
We now define the optimization process.
Starting from $\mathbf{x}_0$, at each iteration $t$ we sample
$\lambda_1^{(t)}, \lambda_2^{(t)} \sim \mathrm{Bernoulli}(p)$ and
$\mathbf{u}^{(t)} \sim \mathcal{N}(\mathbf{0}, \mathbf{I}_d)$,
and compute the gradient estimator $g_\mu(\mathbf{x}_t)$.
We then update the iterate as
\[
\mathbf{x}_{t+1} = \mathbf{x}_t - \eta_t g_\mu(\mathbf{x}_t),
\]
where $\eta_t$ denotes the step size at iteration $t$.
Consequently, each iterate $\mathbf{x}_t$ is a random variable that depends on all previous samples
$\{\lambda_1^{(i)}, \lambda_2^{(i)}, \mathbf{u}^{(i)}\}_{i < t}$.
We define the history up to iteration $t$ as
\[
\mathcal{Z}_t = \left\{\lambda_1^{(i)}, \lambda_2^{(i)}, \mathbf{u}^{(i)}\right\}_{i < t}.
\]
Our goal is to analyze the convergence of this process in expectation over all sources of randomness.
We assume the process is run for $T$ iterations.

Having access to the variance and bias of the gradient estimator, we can now use the standard descent lemma for zero order optimization.
\begin{equation}
f_\mu(\mathbf{x}_{t+1}) \leq f_\mu(\mathbf{x}_t) - \eta_t \langle \nabla f_\mu(\mathbf{x}_t), g_\mu(\mathbf{x}_t) \rangle + \frac{L_1(f_\mu)}{2} \eta_t^2 \|g_\mu(\mathbf{x}_t)\|^2
\end{equation}
Using equations \ref{eq:exp_bias} and \ref{eq:exp_variance}, and taking expectation over $\lambda_1^{(t)}, \lambda_2^{(t)}$ and $\mathbf{u}^{(t)}$ we have:
\begin{equation}
    \begin{gathered}
        \mathbb{E}_{\mathbf{u}^{(t)}, \lambda_1^{(t)}, \lambda_2^{(t)}}[f_\mu(\mathbf{x}_{t+1})] \leq f_\mu(\mathbf{x}_t) - \eta_t p \|\nabla f_\mu(\mathbf{x}_t)\|^2 - \eta_t (1-p) \langle \nabla f_\mu(\mathbf{x}_t), \nabla \tilde{f}_\mu(\mathbf{x}_t) \rangle \\[.6em]
        + \frac{L_1(f_\mu)}{2} \eta_t^2 \left(pL^2_0(f)(d+4)^2 + (1-p)L^2_0(\tilde{f})(d+4)^2 + \frac{2p(1-p)\epsilon^2}{\mu^2}d\right)
    \end{gathered}
\end{equation}
Now taking expectation over the history $\mathcal{Z}_t$, we have:
\begin{equation}
    \begin{gathered}
        \mathbb{E}_{\mathcal{Z}_t, \mathbf{u}^{(t)}, \lambda_1^{(t)}, \lambda_2^{(t)}}[f_\mu(\mathbf{x}_{t+1})] = \mathbb{E}_{\mathcal{Z}_{t+1}}[f_\mu(\mathbf{x}_{t+1})] \\ \leq \mathbb{E}_{\mathcal{Z}_t}[f_\mu(\mathbf{x}_t)] - \eta_t p \mathbb{E}_{\mathcal{Z}_t}[\|\nabla f_\mu(\mathbf{x}_t)\|^2] - \eta_t (1-p) \mathbb{E}_{\mathcal{Z}_t}[\langle \nabla f_\mu(\mathbf{x}_t), \nabla \tilde{f}_\mu(\mathbf{x}_t) \rangle] \\[.6em]
        + \frac{L_1(f_\mu)}{2} \eta_t^2 \left(pL^2_0(f)(d+4)^2 + (1-p)L^2_0(\tilde{f})(d+4)^2 + \frac{2p(1-p)\epsilon^2}{\mu^2}d\right)
    \end{gathered}
\end{equation}
Let's define new notations for simplicity:
\begin{equation}
    \begin{cases}
        A_t = \mathbb{E}_{\mathcal{Z}_t}[f_\mu(\mathbf{x}_t)] \\
        B_t = \mathbb{E}_{\mathcal{Z}_t}[\|\nabla f_\mu(\mathbf{x}_t)\|^2] \\
        C_t = \mathbb{E}_{\mathcal{Z}_t}[\langle \nabla f_\mu(\mathbf{x}_t), \nabla \tilde{f}_\mu(\mathbf{x}_t) \rangle] \\
        D = pL^2_0(f)(d+4)^2 + (1-p)L^2_0(\tilde{f})(d+4)^2 + \frac{2p(1-p)\epsilon^2}{\mu^2}d \\
    \end{cases}
\end{equation}
Rewrite the inequality with new notations:
\begin{equation}
    A_{t+1} \leq A_t - \eta_t p B_t - \eta_t (1-p) C_t + \frac{L_1(f_\mu)}{2} \eta_t^2 D
\end{equation}
Rearranging the terms, we have:
\begin{equation}
    A_{t+1} - A_t \leq - \eta_t p B_t - \eta_t (1-p) C_t + \frac{L_1(f_\mu)}{2} \eta_t^2 D
\end{equation}
Thus for different iterations $t = 0, \ldots, T-1$, we have:
\begin{equation}
    \begin{gathered}
        A_{1} - A_0 \leq - \eta_0 p B_0 - \eta_0 (1-p) C_0 + \frac{L_1(f_\mu)}{2} \eta_0^2 D \\
        A_{2} - A_1 \leq - \eta_1 p B_1 - \eta_1 (1-p) C_1 + \frac{L_1(f_\mu)}{2} \eta_1^2 D \\
        \vdots \\
        A_{T} - A_{T-1} \leq - \eta_{T-1} p B_{T-1} - \eta_{T-1} (1-p) C_{T-1} + \frac{L_1(f_\mu)}{2} \eta_{T-1}^2 D \\
    \end{gathered}
\end{equation}
Summing all these inequalities, we have:
\begin{equation}
    A_T - A_0 \leq - p \sum_{t=0}^{T-1} \eta_t B_t - (1-p) \sum_{t=0}^{T-1} \eta_t C_t + \frac{L_1(f_\mu)}{2} D \sum_{t=0}^{T-1} \eta_t^2
\end{equation}
Here $A_0 = f_\mu(\mathbf{x}_0)$ and $A_T = \mathbb{E}_{\mathcal{Z}_T} [f_\mu(\mathbf{x}_T)]$ is the expectation of the function value at the end of the optimization process. Let's assume we use constant step size $\eta_t = \eta$ for all $t$. Thus we have:
\begin{equation}
    A_T - A_0 \leq - \eta p \sum_{t=0}^{T-1} B_t - \eta (1-p) \sum_{t=0}^{T-1} C_t + \frac{L_1(f_\mu)}{2} \eta^2 D T
\end{equation}
Taking average over $T$ iterations, we have:
\begin{equation}
    \frac{1}{\eta T} (A_T - A_0) \leq - p \frac{1}{T} \sum_{t=0}^{T-1} B_t - (1-p) \frac{1}{T} \sum_{t=0}^{T-1} C_t + \frac{L_1(f_\mu)}{2} \eta D
\end{equation}
Rearranging the terms, we have:
\begin{equation}
    p \frac{1}{T} \sum_{t=0}^{T-1} B_t + (1-p) \frac{1}{T} \sum_{t=0}^{T-1} C_t \leq \frac{A_0 - A_T}{\eta T} + \frac{L_1(f_\mu)}{2} \eta D
\end{equation}
Now minimize the right hand side with respect to $\eta$:
\begin{equation}
    \eta^\ast = \sqrt{\frac{2(A_0 - A_T)}{L_1(f_\mu) D T}}
\end{equation}
Thus we have:
\begin{equation}
    p \frac{1}{T} \sum_{t=0}^{T-1} B_t + (1-p) \frac{1}{T} \sum_{t=0}^{T-1} C_t \leq \sqrt{2L_1(f_\mu) D \left(\frac{A_0 - A_T}{T}\right)}
\end{equation}
From Lipschitz continuity of $f$ and $\tilde{f}$, we know that $f_\mu$ and $\tilde{f}_\mu$ are also Lipschitz continuous with the same constants. Thus we can bound $A_0 - A_T$ as:
\begin{equation}
    |A_0 - A_T| \le L_0(f) \|\mathbf{x}_0 - \mathbf{x}^\ast\| \le L_0(f) R
\end{equation}
Where $R$ is the radius of the ball for optimization budget.
Thus we have the final convergence bound as:
\begin{equation}
    p \frac{1}{T} \sum_{t=0}^{T-1} B_t + (1-p) \frac{1}{T} \sum_{t=0}^{T-1} C_t \leq \sqrt{2L_1(f_\mu) D \left(\frac{L_0(f) R}{T}\right)}   
\end{equation}
From the definition of $f_\mu$ and lemma ~\ref{lem:smoothness} we have $L_1(f_\mu) = \frac{2d^{1/2}}{\mu} L_0(f)$, therefore we have:
\begin{equation}
    p \frac{1}{T} \sum_{t=0}^{T-1} B_t + (1-p) \frac{1}{T} \sum_{t=0}^{T-1} C_t \leq \sqrt{\frac{4d^{1/2} L_0(f)^2 R}{\mu T} D}   
\end{equation}
We also need to choose $\mu$ small enough to have $|f_\mu(\mathbf{x}) - f(\mathbf{x})| \le \epsilon^\prime$. To have this bound, we choose $\mu \le \frac{\epsilon^\prime}{d^{1/2}L_0(f)}$ as in \citet{nesterov2017random}. Thus we can still simplify the bound as:
\begin{equation}
    p \frac{1}{T} \sum_{t=0}^{T-1} B_t + (1-p) \frac{1}{T} \sum_{t=0}^{T-1} C_t \leq \sqrt{\frac{4d L_0(f)^3 R}{\epsilon^\prime T} D}
\end{equation}
Let's also plug this value in $D$ and rewrite that:
\begin{equation}
    D = pL^2_0(f)(d+4)^2 + (1-p)L^2_0(\tilde{f})(d+4)^2 + \frac{2p(1-p)\epsilon^2 d^2 L_0(f)^2}{\epsilon^{\prime 2}}
\end{equation}
The input dimension $d$ usually dominates the constant $4$ in $(d+4)$, thus we can simplify the final bound as:
\begin{equation}
    \begin{aligned}
        D &\approx pL^2_0(f)d^2 + (1-p)L^2_0(\tilde{f})d^2 + \frac{2p(1-p)\epsilon^2 d^2 L_0(f)^2}{\epsilon^{\prime 2}}\\
        &= pd^2L_0(f)^2 \left(1 + \frac{(1-p)}{p}\frac{L^2_0(\tilde{f})}{L^2_0(f)} + 2(1-p)\frac{\epsilon^2}{\epsilon^{\prime 2}}\right)
    \end{aligned}
\end{equation}
Plugging this into the final bound, we have:
\begin{equation}
    \frac{p}{T} \sum_{t=0}^{T-1} \mathbb{E}_{\mathcal{Z}_t}[\|\nabla f_\mu(\mathbf{x}_t)\|^2] + \frac{1-p}{T} \sum_{t=0}^{T-1} \mathbb{E}_{\mathcal{Z}_t}[\langle \nabla f_\mu(\mathbf{x}_t), \nabla \tilde{f}_\mu(\mathbf{x}_t)] \leq \sqrt{\frac{4 d^3 L_0(f)^5 R}{\epsilon^\prime T} \left(p + (1-p)\frac{L^2_0(\tilde{f})}{L^2_0(f)} + 2p(1-p)\frac{\epsilon^2}{\epsilon^{\prime 2}}\right)}
\end{equation}

This bound shows that optimizing the confusion term $\langle \nabla f_\mu(\mathbf{x}), \nabla \tilde{f}_\mu(\mathbf{x}) \rangle$ increases the bound and, consequently, increases the number of steps required by gradient estimation-based attacks.

\section{Theoretical Analysis of Gradient Confusion for Search-Based Attacks}
\label{app:search-based}
In this part, we provide the detailed proof of Theorem~\ref{thm:rs_switch_failure}, which theoretically shows that GC can be effective against search-based attacks by misleading the acceptance/rejection signal $h_t$.
As a result, the adversary requires more steps (and thus more queries) to craft an adversarial example.
We first present auxiliary lemmas used in the proof, and then provide the detailed proof of the theorem.

\subsection{Auxiliary Lemmas}

\begin{lemma}\label{lem:taylor}
Let $f:\mathbb{R}^d \to \mathbb{R}$ be $L$-smooth, i.e.,
\[
\|\nabla f(u) - \nabla f(v)\| \le L \|u - v\|
\quad \forall u,v \in \mathbb{R}^d.
\]
Then for any $x,\epsilon \in \mathbb{R}^d$,
\[
f(x+\epsilon)
= f(x) + \langle \nabla f(x), \epsilon \rangle + r_f,
\]
where the remainder term satisfies
\[
\big|r_f| \le \frac{L}{2}\|\epsilon\|^2.
\]
Equivalently,
\[
\big| f(x+\epsilon) - f(x) - \langle \nabla f(x), \epsilon \rangle \big|
\le \frac{L}{2}\|\epsilon\|^2.
\]
\end{lemma}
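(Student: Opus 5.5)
The plan is to apply the first-order expansion along the segment from $x$ to $x+\epsilon$ and control the deviation of the gradient along that segment using the $L$-smoothness hypothesis. Since $f$ is $L$-smooth it is in particular differentiable with continuous gradient. I therefore define the scalar function $\phi(t) := f(x+t\epsilon)$ for $t\in[0,1]$. By the chain rule, $\phi$ is continuously differentiable with $\phi'(t) = \langle \nabla f(x+t\epsilon), \epsilon\rangle$. The fundamental theorem of calculus then gives
\[
f(x+\epsilon) - f(x) = \int_0^1 \langle \nabla f(x+t\epsilon), \epsilon\rangle \, dt .
\]

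Next I will subtract $\langle \nabla f(x), \epsilon\rangle = \int_0^1 \langle \nabla f(x),\epsilon\rangle\,dt$ from both sides. This defines the remainder
\[
r_f := f(x+\epsilon) - f(x) - \langle \nabla f(x),\epsilon\rangle = \int_0^1 \langle \nabla f(x+t\epsilon) - \nabla f(x), \epsilon\rangle\,dt ,
\]
which immediately yields the stated decomposition $f(x+\epsilon) = f(x) + \langle\nabla f(x),\epsilon\rangle + r_f$.

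To bound $|r_f|$, I will move the absolute value inside the integral and apply Cauchy--Schwarz to the integrand, bounding it by $\|\nabla f(x+t\epsilon)-\nabla f(x)\|\,\|\epsilon\|$. The smoothness hypothesis bounds this in turn by $L\,t\,\|\epsilon\|^2$. Integrating $t$ over $[0,1]$ contributes the factor $1/2$, giving $|r_f|\le \frac{L}{2}\|\epsilon\|^2$. This is exactly the claimed inequality, and the "equivalently" form is the same statement written out.

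There is no real obstacle in this argument. The only point requiring care is justifying that $\phi$ is $C^1$, so that the fundamental theorem of calculus applies. This follows because a Lipschitz gradient is continuous, which is guaranteed by the standing smoothness assumption (Assumption~\ref{assum:smooth}) under which the lemma is used.
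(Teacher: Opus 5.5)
Your proof is correct and matches the paper's argument essentially step for step: the paper also sets $\phi(t)=f(x+t\epsilon)$, uses the fundamental theorem of calculus to write the remainder as $\int_0^1 \langle \nabla f(x+t\epsilon)-\nabla f(x),\epsilon\rangle\,dt$, and bounds it via Cauchy--Schwarz and $L$-smoothness by $\int_0^1 Lt\|\epsilon\|^2\,dt = \frac{L}{2}\|\epsilon\|^2$. Your remark that a Lipschitz gradient is continuous, so $\phi$ is $C^1$ and the fundamental theorem of calculus applies, makes explicit a point the paper leaves unstated.
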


\begin{proof}
Define $\phi(t) = f(x + t\epsilon)$ for $t \in [0,1]$. By the fundamental theorem of calculus,
\[
f(x+\epsilon) - f(x)
= \int_0^1 \langle \nabla f(x+t\epsilon), \epsilon \rangle \, dt.
\]
Subtracting the first-order term yields
\[
f(x+\epsilon) - f(x)
= \langle \nabla f(x), \epsilon \rangle
+ \int_0^1 \langle \nabla f(x+t\epsilon) - \nabla f(x), \epsilon \rangle \, dt.
\]
Thus the remainder term is
\[
r_f
= \int_0^1 \langle \nabla f(x+t\epsilon) - \nabla f(x), \epsilon \rangle \, dt.
\]
Applying Cauchy--Schwarz and $L$-smoothness,
\[
|r_f|
\le \int_0^1 \|\nabla f(x+t\epsilon) - \nabla f(x)\|\,\|\epsilon\|\,dt
\le \int_0^1 Lt\|\epsilon\|^2\,dt
= \frac{L}{2}\|\epsilon\|^2.
\]
\end{proof}

\begin{lemma}\label{lem:gaussian-linear-quadratic}
Let $u \sim \mathcal{N}(0,I_d)$ in $\mathbb{R}^d$ with $d\ge 2$, and let $g\in\mathbb{R}^d$ be fixed with $g\neq 0$. Then for any $c\ge 0$,
\[
\mathbb{P}\!\left(\,|g^\top u| < c\,\|u\|^2\,\right)
\;\le\;
 \sqrt{\frac{2}{\pi}}\, \frac{c\, d}{\|g\|}.
\]
\end{lemma}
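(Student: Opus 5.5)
The plan is to use the polar decomposition of the Gaussian. Write $u = r\,\theta$ with $r=\|u\|$ and $\theta = u/\|u\|$. For $u\sim\mathcal N(0,I_d)$, $r$ and $\theta$ are independent and $\theta$ is uniform on the sphere $S^{d-1}$. By rotation invariance we may take $g=\|g\|e_1$, so that $g^\top u = \|g\|\, r\,\theta_1$. On $\{r>0\}$ (a probability-one event) the event in question becomes
\[
\|g\|\, r\,|\theta_1| < c\, r^2 \iff |\theta_1| < \frac{c\, r}{\|g\|}.
\]
Conditioning on $r$ and using independence then gives
\[
\mathbb P\bigl(|g^\top u|<c\|u\|^2\bigr)=\mathbb E_r\Bigl[\,\mathbb P_\theta\bigl(|\theta_1|< s\bigr)\big|_{s=cr/\|g\|}\Bigr].
\]
This reduces the problem to a one-dimensional bound on the marginal of a sphere coordinate.

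The key step is to show $\mathbb P(|\theta_1|<s)\le 2\sqrt{d/(2\pi)}\, s$ for all $s\ge 0$. For $d\ge 3$ the density of $\theta_1$ on $[-1,1]$ is
\[
\frac{\Gamma(d/2)}{\sqrt\pi\,\Gamma((d-1)/2)}(1-t^2)^{(d-3)/2}.
\]
Since the exponent is nonnegative, this density is maximized at $t=0$. By Wendel's inequality $\Gamma(x+\tfrac12)/\Gamma(x)\le \sqrt{x}$ with $x=(d-1)/2$, the maximum is at most $\sqrt{(d-1)/(2\pi)}\le\sqrt{d/(2\pi)}$. Integrating over $(-s,s)$ gives the claim.

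The case $d=2$ needs separate treatment, because the density $1/(\pi\sqrt{1-t^2})$ is unbounded. Here I would compute directly that $\mathbb P(|\theta_1|<s)=\tfrac{2}{\pi}\arcsin(\min(s,1))$. Since $\arcsin s\le \tfrac{\pi}{2}s$ on $[0,1]$, this is at most $s\le 2\sqrt{2/(2\pi)}\,s$. For $s>1$ the bound is trivial in either case, because the right-hand side exceeds $1$.

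Plugging this back in and using Jensen's inequality, $\mathbb E[r]\le\sqrt{\mathbb E[r^2]}=\sqrt d$ (Lemma~\ref{lem:gaussian-moments}), we get
\[
\mathbb P\bigl(|g^\top u|<c\|u\|^2\bigr)\le 2\sqrt{\frac{d}{2\pi}}\,\frac{c\,\mathbb E[r]}{\|g\|}\le \sqrt{\frac{2}{\pi}}\,\frac{c\,d}{\|g\|},
\]
which is exactly the stated bound. The main obstacle is the uniform density bound for $\theta_1$: the Gamma-ratio estimate for $d\ge3$ and the unbounded-density case $d=2$. Everything else is the standard independence of radius and direction for an isotropic Gaussian, together with one application of Jensen.
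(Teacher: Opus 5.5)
Your proposal is correct and follows the same route as the paper's proof: the polar decomposition $u=r\theta$ with independent radius and direction, the bound $\mathbb{P}(|\theta_1|<s)\le 2\sqrt{d/(2\pi)}\,s$ obtained from the density of the sphere coordinate at $0$ together with the Gamma-ratio inequality, and finally $\mathbb{E}[r]\le\sqrt{d}$. Your separate treatment of $d=2$ is a genuine improvement: the paper asserts $\sup_t f_T(t)=f_T(0)=C_d$ for all $d\ge 2$, but at $d=2$ the density $1/(\pi\sqrt{1-t^2})$ is unbounded, so that step fails, and your direct computation $\mathbb{P}(|\theta_1|<s)=\tfrac{2}{\pi}\arcsin(\min(s,1))\le s$ closes exactly this gap.
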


\begin{proof}
Write $u = r s$ where $r=\|u\|\ge 0$ and $s=u/\|u\|\in S^{d-1}$. By rotational invariance of the standard Gaussian, $r$ and $s$ are independent, $s$ is uniform on the unit sphere, and $r^2\sim\chi^2_d$. Let $a=\|g\|>0$. Then
\[
g^\top u = r\, g^\top s.
\]
By rotation symmetry, $g^\top s \stackrel{d}{=} aT$, where $T$ is the first coordinate of a uniform random point on $S^{d-1}$. Hence
\[
\mathbb{P}\!\left(|g^\top u| < c\|u\|^2 \,\middle|\, r\right)
= \mathbb{P}\!\left(|a r T| < c r^2 \,\middle|\, r\right)
= \mathbb{P}\!\left(|T| < \frac{c}{a}\,r\right).
\]
For $d\ge 2$, $T$ has density on $[-1,1]$
\[
f_T(t)=C_d(1-t^2)^{\frac{d-3}{2}},
\qquad
C_d=\frac{\Gamma(d/2)}{\sqrt{\pi}\,\Gamma((d-1)/2)}.
\]
Since $\sup_{t\in[-1,1]} f_T(t)=f_T(0)=C_d$, for any $t\ge 0$ we have
\[
\mathbb{P}(|T|<t)=\int_{-t}^{t} f_T(x)\,dx
\le 2t\,C_d.
\]
Therefore,
\[
\mathbb{P}\!\left(|T| < \frac{c}{a}\,r\right)
\le 2C_d\frac{c}{a}\,r.
\]
Taking expectation over $r$ gives
\[
\mathbb{P}\!\left(|g^\top u| < c\|u\|^2\right)
\le 2C_d\frac{c}{a}\,\mathbb{E}[r].
\]
By Cauchy--Schwarz, $\mathbb{E}[r]\le \sqrt{\mathbb{E}\|u\|^2}=\sqrt{d}$.
Finally, using $\Gamma(x+\tfrac12)/\Gamma(x)\le \sqrt{x}$ for all $x\ge \tfrac12$ with $x=\frac{d-1}{2}$,
\[
C_d=\frac{\Gamma(d/2)}{\sqrt{\pi}\,\Gamma((d-1)/2)}
=\frac{1}{\sqrt{\pi}}\frac{\Gamma(x+\tfrac12)}{\Gamma(x)}
\le \sqrt{\frac{d}{2\pi}}.
\]
Combining gives
\[
\mathbb{P}\!\left(|g^\top u| < c\|u\|^2\right)
\le 2\left(\sqrt{\frac{d}{2\pi}}\right)\frac{c}{\|g\|}\left(\sqrt{d}\right)
=\sqrt{\frac{2}{\pi}}\,\frac{c\,d}{\|g\|}.
\]
\end{proof}

\begin{lemma}\label{lem:sign_disagreement}
Let $u\sim \mathcal{N}(0,I_d)$ and let $g,\tilde g\in\mathbb{R}^d$ be nonzero vectors. Define $\theta\in[0,\pi]$ to be the angle between $g$ and $\tilde g$, i.e.
\[
\cos\theta \;=\; \frac{\langle g,\tilde g\rangle}{\|g\|\,\|\tilde g\|}.
\]
Then
\[
\mathbb{P}\Big(\sign(\langle g,u\rangle)\neq \sign(\langle \tilde g,u\rangle)\Big)
\;=\;
\frac{\theta}{\pi}
=
\frac{1}{\pi}\arccos\!\left(\frac{\langle g,\tilde g\rangle}{\|g\|\,\|\tilde g\|}\right).
\]
\end{lemma}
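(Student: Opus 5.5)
The plan is to reduce to a two-dimensional problem using rotational invariance of the standard Gaussian. The event $\sign(\langle g,u\rangle)\neq\sign(\langle\tilde g,u\rangle)$ depends only on the projection of $u$ onto the plane $P=\mathrm{span}\{g,\tilde g\}$. If $g$ and $\tilde g$ are collinear, the claim can be checked directly: for $\theta=0$ the signs agree almost surely, and for $\theta=\pi$ they disagree almost surely, since $\langle g,u\rangle=0$ has probability zero. So I would assume $P$ is two-dimensional and choose an orthonormal basis $e_1,e_2$ of $P$ with $g=\|g\|e_1$ and $\tilde g=\|\tilde g\|(\cos\theta\, e_1+\sin\theta\, e_2)$. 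Then $w=(\langle e_1,u\rangle,\langle e_2,u\rangle)\sim\mathcal N(0,I_2)$, and the signs of the two inner products equal the signs of $\langle(1,0),w\rangle$ and $\langle(\cos\theta,\sin\theta),w\rangle$. Positive scaling of $g$ and $\tilde g$ does not change these signs.

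Next, I would write $w=\rho(\cos\phi,\sin\phi)$. Because the bivariate standard Gaussian is rotationally symmetric, $\phi$ is uniform on $[0,2\pi)$ and independent of $\rho$, and $\rho>0$ almost surely. The two signs are those of $\cos\phi$ and $\cos(\phi-\theta)$. The sign of $\cos\phi$ changes exactly when the line through the origin orthogonal to $(1,0)$ is crossed, that is, at $\phi=\pm\pi/2$. The sign of $\cos(\phi-\theta)$ changes at $\phi=\theta\pm\pi/2$. The signs disagree precisely when $\phi$ lies between these two orthogonal lines. This region is a pair of antipodal arcs, each of angular length $\theta$, for example $\phi\in(\pi/2,\pi/2+\theta)$ and its antipode. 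Its total measure is therefore $2\theta$ out of $2\pi$, which gives probability $\theta/\pi$.

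The main obstacle is only careful bookkeeping. I need to verify that the disagreement region is exactly these two arcs for every $\theta\in[0,\pi]$, with the boundary sets having measure zero, and that the convention $\sign(0)$ does not matter almost surely. I would check this by direct case analysis on $\phi$ modulo $\pi$, in the style of Goemans--Williamson. Expressing $\theta$ through $\arccos$ of the normalized inner product then gives the stated formula.
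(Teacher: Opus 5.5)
Your proof is correct, but it takes a different route from the paper. The paper sets $X=\langle g,u\rangle$ and $Y=\langle\tilde g,u\rangle$ and treats them as a centered bivariate Gaussian with correlation $\rho=\cos\theta$. It then quotes the classical orthant identity $\mathbb{P}(X>0,Y>0)=\tfrac14+\tfrac{1}{2\pi}\arcsin\rho$ (Sheppard's formula) and uses the symmetry $(X,Y)\mapsto(-X,-Y)$ to conclude $\mathbb{P}(XY<0)=1-2\,\mathbb{P}(X>0,Y>0)=\tfrac{1}{\pi}\arccos\rho$.

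You instead project onto $\mathrm{span}\{g,\tilde g\}$ and use the fact that the polar angle of a standard planar Gaussian is uniform. You then measure the two antipodal sectors of opening $\theta$ on which $\sign(\cos\phi)$ and $\sign(\cos(\phi-\theta))$ disagree. This random-hyperplane argument is essentially what proves Sheppard's formula in the first place. Your count also yields $\mathbb{P}(X>0,Y>0)=(\pi-\theta)/(2\pi)$, which matches the paper's expression because $\arcsin\rho=\pi/2-\theta$.

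The paper's version is shorter and purely algebraic once the identity is granted. Yours is self-contained and makes the geometric origin of the $\arccos$ term explicit. It also handles the collinear cases $\theta\in\{0,\pi\}$ and the $\sign(0)$ convention directly, which the paper leaves implicit. These cases are harmless there too: the orthant identity still holds at $\rho=\pm1$, and $\{X=0\}\cup\{Y=0\}$ is a null event because $g,\tilde g\neq 0$.
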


\begin{proof}
Set $X=\langle g,u\rangle$ and $Y=\langle \tilde g,u\rangle$. Then $(X,Y)$ is a centered bivariate Gaussian with
\[
\operatorname{Var}(X)=\|g\|^2,\quad \operatorname{Var}(Y)=\|\tilde g\|^2,\quad
\operatorname{Cov}(X,Y)=\langle g,\tilde g\rangle,
\]
so the correlation is $\rho=\langle g,\tilde g\rangle/(\|g\|\|\tilde g\|)=\cos\theta$.
A standard orthant probability identity gives
\[
\mathbb{P}(X>0,Y>0)=\frac14+\frac{1}{2\pi}\arcsin(\rho).
\]
By symmetry,
\[
\mathbb{P}(XY<0)=1-2\mathbb{P}(X>0,Y>0)
=\frac12-\frac{1}{\pi}\arcsin(\rho)
=\frac{1}{\pi}\arccos(\rho)
=\frac{\theta}{\pi}.
\]
\end{proof}

\subsection{Proof of Theorem~\ref{thm:rs_switch_failure}}
With the above lemmas in place, we now state the theorem and present a detailed proof.

\begin{theorem}[Theorem~\ref{thm:rs_switch_failure} (restated)]
Under Assumption~\ref{assum:smooth}, for $d\ge 2$, $\mathbf{u}\sim\mathcal N(0,I_d)$, fixed $\mathbf{x}\in\mathbb R^d$, $\mu>0$,
and with per-query switching defense $f_{\mathrm{def}}$, the signals
\[
h := f(\mathbf{x}+\mu\mathbf{u})-f(\mathbf{x}),
\qquad
h_{\mathrm{def}} := f_{\mathrm{def}}(\mathbf{x}+\mu\mathbf{u})-f_{\mathrm{def}}(\mathbf{x})
\]
satisfy the bound in~\eqref{eq:rs_switch_failure_bound}.
\end{theorem}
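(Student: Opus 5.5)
Throughout, write $g:=\nabla f(\mathbf{x})$ and $\tilde g:=\nabla \tilde f(\mathbf{x})$, and let $\lambda_1,\lambda_2\sim\mathrm{Bernoulli}(p)$ be the independent switching variables used at $\mathbf{x}+\mu\mathbf{u}$ and at $\mathbf{x}$; both are independent of $\mathbf{u}$.

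\textbf{Step 1: condition on the event that the defense answers both queries with $\tilde f$.} Let $E:=\{\lambda_1=\lambda_2=0\}$, so $\mathbb{P}(E)=(1-p)^2$. On $E$ we have $h_{\mathrm{def}}=\tilde h:=\tilde f(\mathbf{x}+\mu\mathbf{u})-\tilde f(\mathbf{x})$. Discarding the other three switching configurations, whose contribution is nonnegative, gives
\[
\mathbb{P}\bigl(\sign(h)\neq\sign(h_{\mathrm{def}})\bigr)\ \ge\ (1-p)^2\,\mathbb{P}_{\mathbf{u}}\bigl(\sign(h)\neq\sign(\tilde h)\bigr).
\]
This is where the factor $(1-p)^2$ comes from. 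It remains to lower bound the $\mathbf{u}$-probability of a sign disagreement between two deterministic functions. We may assume $g,\tilde g\neq 0$, since otherwise the bound is vacuous by convention.

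\textbf{Step 2: reduce to a disagreement of linear signs.} By Lemma~\ref{lem:taylor} with $\epsilon=\mu\mathbf{u}$,
\[
h=\mu\langle g,\mathbf{u}\rangle+r_f,\quad |r_f|\le \tfrac{L_1(f)}{2}\mu^2\|\mathbf{u}\|^2,
\]
and similarly $\tilde h=\mu\langle\tilde g,\mathbf{u}\rangle+r_{\tilde f}$ with the analogous remainder bound. Define the bad events
\[
B_f:=\bigl\{|\langle g,\mathbf{u}\rangle|< \tfrac{L_1(f)\mu}{2}\|\mathbf{u}\|^2\bigr\},\qquad B_{\tilde f}:=\bigl\{|\langle\tilde g,\mathbf{u}\rangle|< \tfrac{L_1(\tilde f)\mu}{2}\|\mathbf{u}\|^2\bigr\}.
\]
Off $B_f$ the remainder cannot flip the sign, so $\sign(h)=\sign(\langle g,\mathbf{u}\rangle)$; likewise $\sign(\tilde h)=\sign(\langle\tilde g,\mathbf{u}\rangle)$ off $B_{\tilde f}$. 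Hence
\[
\{\sign\langle g,\mathbf{u}\rangle\neq\sign\langle\tilde g,\mathbf{u}\rangle\}\setminus(B_f\cup B_{\tilde f})\ \subseteq\ \{\sign h\neq \sign\tilde h\}.
\]
A union bound then gives
\[
\mathbb{P}(\sign h\neq\sign\tilde h)\ \ge\ \mathbb{P}\bigl(\sign\langle g,\mathbf{u}\rangle\neq\sign\langle\tilde g,\mathbf{u}\rangle\bigr)-\mathbb{P}(B_f)-\mathbb{P}(B_{\tilde f}).
\]

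\textbf{Step 3: evaluate each term with the earlier lemmas.} Lemma~\ref{lem:sign_disagreement} gives the first probability exactly, as $\frac1\pi\arccos\bigl(\langle g,\tilde g\rangle/(\|g\|\|\tilde g\|)\bigr)$. Lemma~\ref{lem:gaussian-linear-quadratic} with $c=L_1(f)\mu/2$ gives
\[
\mathbb{P}(B_f)\le\sqrt{2/\pi}\,\frac{L_1(f)\mu d}{2\|g\|}=\frac{\mu d\,L_1(f)}{\sqrt{2\pi}\,\|g\|},
\]
and the same holds for $B_{\tilde f}$ with $L_1(\tilde f)$ and $\tilde g$ in place of $L_1(f)$ and $g$. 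Combining Steps 1 to 3 reproduces \eqref{eq:rs_switch_failure_bound} exactly.

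\textbf{Main obstacle.} The delicate point is the sign convention on the boundary. We need the event $\{\langle g,\mathbf{u}\rangle=0\}$ to have probability zero, which holds because $g\neq0$. We also need ties $h=0$ to be handled consistently with the strict inequalities in the bad events; only strict inequalities are used, so boundary cases lie in null sets or inside $B_f$ and $B_{\tilde f}$. Beyond this, the argument is simply the conditioning in Step 1 followed by a union bound.
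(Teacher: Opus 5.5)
Your proposal is correct and follows the paper's proof essentially step for step. Both arguments condition on $\lambda_1=\lambda_2=0$ to extract the factor $(1-p)^2$, linearize both differences via Lemma~\ref{lem:taylor}, remove the sign-instability events by a union bound, and then evaluate the linear sign disagreement with Lemma~\ref{lem:sign_disagreement} and the bad events with Lemma~\ref{lem:gaussian-linear-quadratic}. The one cosmetic difference is that the paper puts the strict inequality in the good event $\{\tfrac{L_1(f)}{2}\mu\|\mathbf{u}\|^2<|\langle g,\mathbf{u}\rangle|\}$, so the sign is preserved pointwise; your strict bad event leaves the boundary $\{|\langle g,\mathbf{u}\rangle|=\tfrac{L_1(f)\mu}{2}\|\mathbf{u}\|^2\}$ in its complement, which is harmless because that set is Lebesgue-null when $g\neq 0$.
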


\begin{proof}
Let
\[
\Delta_f := f(x+\mu u)-f(x),
\qquad
\Delta_{\tilde f} := \tilde f(x+\mu u)-\tilde f(x).
\]
By definition, $h=\Delta_f$. Let $\lambda_1,\lambda_2\stackrel{\text{i.i.d.}}{\sim}\mathrm{Bernoulli}(p)$ be independent of $u$, and write
\[
f_{\mathrm{def}}(x+\mu u)=\lambda_1 f(x+\mu u)+(1-\lambda_1)\tilde f(x+\mu u),\quad
f_{\mathrm{def}}(x)=\lambda_2 f(x)+(1-\lambda_2)\tilde f(x).
\]
Define $E:=\{\sign(h)\neq \sign(h_{\mathrm{def}})\}$. Conditioning on $(\lambda_1,\lambda_2)$ and keeping only the case $(0,0)$ gives
\begin{align*}
\Pp(E)
&\ge \Pp\big(E\cap\{\lambda_1=0,\lambda_2=0\}\big) \\
&= \Pp(\lambda_1=0,\lambda_2=0)\;
\Pp\!\left(\sign(\Delta_f)\neq \sign(\Delta_{\tilde f})\right) \\
&=(1-p)^2\;
\Pp\!\left(\sign(\Delta_f)\neq \sign(\Delta_{\tilde f})\right).
\end{align*}
It remains to lower bound $\Pp(\sign(\Delta_f)\neq \sign(\Delta_{\tilde f}))$.

Define the linear parts
\[
A := \langle \nabla f(x),u\rangle,
\qquad
\tilde A := \langle \nabla \tilde f(x),u\rangle.
\]
By Lemma~\ref{lem:taylor} applied to $f$ with $L=L_1(f)$ and $\epsilon=\mu u$,
\[
\Delta_f = \mu A + r_f,
\qquad
|r_f|\le \frac{L_1(f)}{2}\mu^2\|u\|^2.
\]
Similarly, applying Lemma~\ref{lem:taylor} to $\tilde f$ with $L=L_1(\tilde f)$,
\[
\Delta_{\tilde f} = \mu \tilde A + r_{\tilde f},
\qquad
|r_{\tilde f}|\le \frac{L_1(\tilde f)}{2}\mu^2\|u\|^2.
\]

\paragraph{Step 1: sign-stability events.}
If $|\delta|<|a|$, then $\sign(a+\delta)=\sign(a)$. Define
\[
G_f := \left\{\frac{L_1(f)}{2}\mu\|u\|^2<|A|\right\},
\qquad
G_{\tilde f} := \left\{\frac{L_1(\tilde f)}{2}\mu\|u\|^2<|\tilde A|\right\}.
\]
On $G_f$, we have $|r_f|<\mu|A|$, hence $\sign(\Delta_f)=\sign(A)$; similarly on $G_{\tilde f}$,
$\sign(\Delta_{\tilde f})=\sign(\tilde A)$. Therefore,
\[
\Pp\!\left(\sign(\Delta_f)\neq \sign(\Delta_{\tilde f})\right)
\ge
\Pp\!\left(\sign(A)\neq \sign(\tilde A)\right)
-\Pp(G_f^c)-\Pp(G_{\tilde f}^c).
\]

\paragraph{Step 2: linear sign disagreement.}
By Lemma~\ref{lem:sign_disagreement} with $g=\nabla f(x)$ and $\tilde g=\nabla \tilde f(x)$,
\[
\Pp\!\left(\sign(A)\neq \sign(\tilde A)\right)
=
\frac{1}{\pi}\arccos\!\left(
\frac{\langle \nabla f(x),\nabla \tilde f(x)\rangle}{\|\nabla f(x)\|\,\|\nabla \tilde f(x)\|}
\right).
\]

\paragraph{Step 3: bound Taylor-failure probabilities.}
Using Lemma~\ref{lem:gaussian-linear-quadratic} with $g=\nabla f(x)$ and $c=\frac{L_1(f)}{2}\mu$,
\[
\Pp(G_f^c)
=
\Pp\!\left(|A|<\tfrac{L_1(f)}{2}\mu\|u\|^2\right)
\le
\sqrt{\frac{2}{\pi}}\frac{(\tfrac{L_1(f)}{2}\mu)d}{\|\nabla f(x)\|}
=
\frac{\mu d}{\sqrt{2\pi}}\frac{L_1(f)}{\|\nabla f(x)\|}.
\]
Similarly,
\[
\Pp(G_{\tilde f}^c)
\le
\frac{\mu d}{\sqrt{2\pi}}\frac{L_1(\tilde f)}{\|\nabla \tilde f(x)\|}.
\]

\paragraph{Step 4: combine and multiply by $(1-p)^2$.}
Combining Steps 1--3 yields
\[
\Pp\!\left(\sign(\Delta_f)\neq \sign(\Delta_{\tilde f})\right)
\ge
\frac{1}{\pi}\arccos\!\left(
\frac{\langle \nabla f(x),\nabla \tilde f(x)\rangle}{\|\nabla f(x)\|\,\|\nabla \tilde f(x)\|}
\right)
-
\frac{\mu d}{\sqrt{2\pi}}
\left(
\frac{L_1(f)}{\|\nabla f(x)\|}
+
\frac{L_1(\tilde f)}{\|\nabla \tilde f(x)\|}
\right).
\]
Multiplying by $(1-p)^2$ gives~\eqref{eq:rs_switch_failure_bound}.
\end{proof}

\subsection{Why Gaussian Directions?}
\label{app:gaussian}

We model the search direction as Gaussian because it is rotationally invariant and makes the analysis depend only on geometry. This is also a good approximation for many isotropic samplers used in practice: in high dimensions, one-dimensional projections (i.e., inner products with a random direction) of a broad class of high-dimensional random vectors are close to Gaussian under mild conditions, so Gaussian modeling captures the sign behavior of these projections well~\citep{meckes2012approximation}. In the proof, the key quantity is the probability that the two linearized one-step signals induced by $\nabla f(\mathbf{x})$ and $\nabla\tilde f(\mathbf{x})$ disagree in sign, and rotational invariance implies this probability depends only on their relative orientation. This is exactly where the arccosine term in \eqref{eq:rs_switch_failure_bound} comes from.



\end{document}